\documentclass{article}

\usepackage{arxiv}
\usepackage{algorithm,algorithmic}
\usepackage[utf8]{inputenc} % allow utf-8 input
\usepackage[T1]{fontenc}    % use 8-bit T1 fonts
\usepackage{hyperref}       % hyperlinks
\usepackage{url}            % simple URL typesetting
\usepackage{booktabs}       % professional-quality tables
\usepackage{amsfonts}       % blackboard math symbols
\usepackage{nicefrac}       % compact symbols for 1/2, etc.
\usepackage{microtype}      % microtypography
\usepackage[english]{babel}
\usepackage{mathrsfs}
\usepackage{appendix}
\usepackage{geometry}
\usepackage{times}
\usepackage{latexsym}
\usepackage[]{mathtools}
\usepackage[]{amsthm} % provides proof, theorem, lemma, etc. environments
\usepackage[]{amssymb}
\usepackage{bbm}
\usepackage{subfigure}

\usepackage{xcolor}
\newtheorem{theorem}{Theorem}

\newtheorem{lemma}{Lemma}

\newtheorem{definition}{Definition}

\usepackage[]{mathtools} 
\def\##1\#{\begin{align}#1\end{align}}
\def\$#1\${\begin{align*}#1\end{align*}}
\usepackage{enumitem}

\newcommand{\RNum}[1]{\uppercase\expandafter{\romannumeral #1\relax}}

\DeclareMathOperator*{\argmin}{argmin}

\newcommand{\BR}{\mathbbm{1}}
\newcommand{\E}{\mathbb E}
\newcommand{\PP}{\mathbb P}

\newcommand{\Hy}{\mathcal{H}}
\newcommand{\A}{\textsf{A}}
\newcommand{\FNR}{\textsf{FNR}}
\newcommand{\FPR}{\textsf{FPR}}

\newcommand{\f}{f^*}
\newcommand{\rf}{f}

\newcommand{\D}{\mathcal D}

\newcommand{\squishlist}{
   \begin{list}{$\bullet$}
    { \setlength{\itemsep}{0pt}      \setlength{\parsep}{3pt}
      \setlength{\topsep}{3pt}       \setlength{\partopsep}{0pt}
      \setlength{\leftmargin}{1.5em} \setlength{\labelwidth}{1em}
      \setlength{\labelsep}{0.5em} } }
\newcommand{\squishend}{  \end{list}  }

\ifodd 1

\newcommand{\yl}[1]{\textbf{\color{red}(Yang: #1)}}

\newcommand{\clar}[1]{\textbf{\color{green}(NEED CLARIFICATION: #1)}}
\newcommand{\response}[1]{\textbf{\color{magenta}(RESPONSE: #1)}}
\else

\newcommand{\yl}[1]{}
\newcommand{\clar}[1]{}
\newcommand{\response}[1]{}
\fi

\title{Incentives for Federated Learning: a Hypothesis Elicitation Approach}

\author{
 Yang Liu\\
  UC Santa Cruz\\
  \texttt{yangliu@ucsc.edu} \\
  %% examples of more authors
   \And
 Jiaheng Wei \\
  UC Santa Cruz\\
  \texttt{jiahengwei@ucsc.edu} \\
}

\begin{document}
\maketitle

\begin{abstract}
Federated learning provides a promising paradigm for collecting machine learning models from distributed data sources without compromising users' data privacy. The success of a credible federated learning system builds on the assumption that the decentralized and self-interested users will be willing to participate to contribute their local models in a trustworthy way. However, without proper incentives, users might simply opt out the contribution cycle, or will be mis-incentivized to contribute spam/false information.  This paper introduces solutions to incentivize truthful reporting of a local, user-side machine learning model for federated learning. Our results build on the literature of information elicitation, but focus on the questions of \emph{eliciting hypothesis} (rather than eliciting human predictions). We provide a scoring rule based framework that incentivizes truthful reporting of local hypotheses at a Bayesian Nash Equilibrium. We study the market implementation, accuracy as well as robustness properties of our proposed solution too. We verify the effectiveness of our methods using MNIST and CIFAR-10 datasets. Particularly we show that by reporting low-quality hypotheses, users will receive decreasing scores (rewards, or payments).
\end{abstract}

% keywords can be removed
\keywords{Federated learning \and Hypothesis Elicitation \and Peer prediction}

\section{Introduction}

When a company relies on distributed users' data to train a machine learning model, federated learning \cite{mcmahan2016communication,yang2019federated,kairouz2019advances} promotes the idea that users/customers' data should be kept local, and only the locally held/learned hypothesis will be shared/contributed from each user. While federated learning has observed success in keyboard recognition \cite{hard2018federated} and in language modeling \cite{chen2019federated}, existing works have made an implicit assumption that participating users will be willing to contribute their local hypotheses to help the central entity to refine the model. Nonetheless, without proper incentives, agents can choose to opt out of the participation, to contribute either uninformative or outdated information, or to even contribute malicious model information. Though being an important question for federated learning \cite{yang2019federated,liu2020fedcoin,hansus,han20}, this capability of providing adequate incentives for user participation has largely been overlooked. In this paper we ask the questions that: \emph{Can a machine learning hypothesis be incentivized/elicited by a certain form of scoring rules from self-interested agents?} The availability of a scoring rule will help us design a payment for the elicited hypothesis properly to motivate the reporting of high-quality ones. The corresponding solutions complement the literature of federated learning by offering a generic template for incentivizing users' participation. 

We address the challenge via providing a scoring framework to elicit hypotheses truthfully from the self-interested agents/users\footnote{Throughout this paper, we will interchange the use of agents and users.}. More concretely, suppose an agent $i$ has a locally observed hypothesis $\f_i$. For instance, the hypothesis can come from solving a local problem:
%Each agent believes that his own classifier is the Bayes optimal classifier that minimizes the 0-1 risk:
   $
    \f_i = \argmin_{f_i \sim \mathcal{H}_i} \mathbb{E}_{(X,Y) \sim \mathcal D} [\ell_i \left(f_i(X),
    Y\right) ]
   $
according to a certain hypothesis class $\Hy_i$, a distribution $\mathcal D$, a loss function $\ell_i$. The goal is to design a scoring function $S(\cdot)$ that takes a reported hypothesis $\rf_i$, and possibly a second input argument (to be defined in the context) such that
$
\E\left[S(\f_i, \cdot)\right] \geq \E\left[S(\rf_i, \cdot)\right],\forall \rf_i 
$, where the expectation is w.r.t. agent $i$'s local belief, which is specified in context. If the above can be achieved, $S(\cdot)$ can serve as the basis of a payment system in federated learning such that agents paid by $S(\cdot)$ will be incentivized to contribute their local models truthfully. In this work, we primarily consider two settings, with arguably increasing difficulties in designing our mechanisms:

% \vspace{-0.2in}
\paragraph{With ground truth verification $(X,Y)$} We will start with a relatively easier setting where we as the designer has access to a labeled dataset $\{(x_n,y_n)\}_{n=1}^N$. We will demonstrate how this question is similar to the classical information elicitation problem with strictly proper scoring rule \cite{gneiting2007strictly}, calibrated loss functions \cite{bartlett2006convexity} and peer prediction (information elicitation without verification) \cite{miller2005eliciting}.

%Each agent believes that his own classifier is the Bayes optimal classifier that minimizes the 0-1 risk:
%    \[
%    f_i = \argmin_{f_i \sim \mathcal{H}} \mathbb{E}_{(X,Y)} \bigg[\mathbbm{1} \Big(f_i(X) \neq
%    Y\Big) \bigg]
%    \]
%    Then probably from here, any classification calibrated loss functions can serve as a scoring function.
   
%   We also consider the casebut each agent not necessarily believe their classifier is maximizing/minmizing the Bayes risk. In this setting, let's denote the ground truth $Y$ also as a hypothesis $f^*(X)$. Then we can apply a peer prediction scoring function that takes $f_i(X)$ and $f^*(X)$ as inputs. For instance, if we follow the correlated agreement mechanism and all we need to know is the marginal agreement matrix $\Delta(f_i,f^*)$ defined between $f_i$ and $f^*$. 
    %\item 
    % \vspace{-0.2in}

\paragraph{With only access to features $X$}    The second setting is when we only have $X$ but not the ground truth $Y$. This case is arguably more popular in practice, since collecting label annotation requires a substantial amount of efforts. For instance, a company is interested in eliciting/training a classifier for an image classification problem. While it has access to images, it might not have spent efforts in collecting labels for the images. We will again present a peer prediciton-ish solution for this setting. %show that how this setting can be easily translated to the standard peer prediction setting. %: $S(f_i(X), f_j(X))$. 

Besides establishing the desired incentive properties of the scoring rules, we will look into questions such as when the scoring mechanism is rewarding accurate classifiers, how to build a prediction market-ish solution to elicit improving classifiers, as well as our mechanism's robustness against possible collusion. 
Our work can be viewed both as a contribution to federated learning via providing incentives for selfish agents to share their hypotheses, as well as a contribution to the literature of information elicitation via studying the problem of hypothesis elicitation. We validate our claims via experiments using the MNIST and CIFAR-10 datasets.
    %\item
    
%\paragraph{Incentivizing differentially private hypothesis}    The last setting that is of interest is incentivizing differentially private hypothesis. In response to privacy attacks using shared hypothesis \cite{}, recent works proposed differentially private algorithms for sharing machine learning models. We extend our results to eliciting differentially private hypothesis. 

All omitted proofs and experiment details can be found in the supplementary materials.

\subsection{Related works}
Due to space limit, we only briefly survey the related two lines of works:

% \vspace{-0.2in}
\paragraph{Information elicitation} Our solution concept relates most closely to the literature of information elicitation \cite{Brier:50,Win:69,Savage:71,Matheson:76,Jose:06,Gneiting:07}. Information elicitation primarily focuses on the questions of developing scoring rule to incentivize or to elicite self-interested agents' private probalistic beliefs about a private event (e.g., how likely  will COVID-19 death toll reach 100K by May 1?). Relevant to us, \cite{abernethy2011collaborative} provides a market treatment to elicit more accurate classifiers but the solution requires the designer to have the ground truth labels and agents to agree on the losses. We provide a more generic solution without above limiations.

A more challenging setting features an elicitation question while there sans ground truth verification. Peer prediction \cite{Prelec:2004,MRZ:2005,witkowski2012robust,radanovic2013,Witkowski_hcomp13,dasgupta2013crowdsourced,shnayder2016informed,radanovic2016incentives,LC17,kong2019information,liu2020} is among the most popular solution concept. The core idea of peer prediction is to score each agent based on another reference report elicited from the rest of agents, and to leverage on the stochastic correlation between different agents' information. Most relevant to us is the Correlated Agreement mechanism \cite{dasgupta2013crowdsourced,shnayder2016informed,kong2019information}. We provide a separate discussion of it in Section \ref{sec:pp}.% This line of research includes~\cite{Prelec:2004,MRZ:2005,witkowski2012robust,radanovic2013,Witkowski_hcomp13,dasgupta2013crowdsourced,shnayder2016informed,radanovic2016incentives,LC17,kong2019information,liu2020}. 

% \vspace{-0.2in}
\paragraph{Federated learning} Federated learning \cite{mcmahan2016communication,hard2018federated,yang2019federated} arose recently as a promising architecture for learning from massive amounts of users' local information without polling their private data. The existing literature has devoted extensive efforts to make the model sharing process more secure \cite{secure_1, secure_2, secure_3, secure_4, secure_5, bonawitz2016practical}, more efficient \cite{efficient_1,efficient_2,efficient_3,efficient_4,fl:communication, efficient_6, efficient_7}, more robust \cite{robust_1,robust_2,robust_3,pillutla2019robust} to heterogeneity in the distributed data source, among many other works. For more detailed survey please refer to several thorough ones \cite{yang2019federated,kairouz2019advances}.

The incentive issue has been listed as an outstanding problem in federated learning \cite{yang2019federated}. 
There have been several very recent works touching on the challenge of incentive design in federated learning. \cite{liu2020fedcoin} proposed a currency system for federated learning based on blockchain techniques. \cite{hansus} describes a payoff sharing algorithm that maximizes system designer's utility, but the solution does not consider the agents' strategic behaviors induced by insufficient incentives. \cite{han20} further added fairness guarantees to an above reward system. We are not aware of a systematic study of the truthfulness in incentiving hypotheses in federated learning, and our work complements above results by providing an incentive-compatible scoring system for building a payment system for federated learning.

\section{Formulation}
Consider the setting with a set $\mathcal{K} = \{ 1, 2, ..., K \}$ of agents, each with a
hypothesis $\f_i \in \mathcal{H}_i$ which maps feature space $X$ to label
space $Y \in \{1, 2,...,L\}:=[L]$. The hypothesis space $\mathcal{H}_i$ is the space of hypotheses accessible or yet considered by agent $i$, perhaps as a function of the subsets of $X$ or $Y$ which have been encountered by $i$ or the agent's available computational power. 
$\f_i$ is often obtained following a local optimization process. For example, % given some \textit{true} function $f^*: X \to Y$, 
 $\f_i$ can be defined as the
function which minimizes a loss function over an agent's hypothesis space.
\[
\f_i = \argmin_{f_i \sim \mathcal{H}_i} \mathbb{E}_{\mathcal D_i} \left[\mathbbm{1} \Big(f_i(X) \neq
  Y\Big)~\right]
  \]
where in above $\D_i$ is the local distribution that agent $i$ has access to train and evaluate $\f_i$.
In the federated learning setting, note that $\f_i$ can also represent the optimal output from a private training algorithm and $\mathcal{H}_i$ would denote a training hypothesis space that encodes a certainly level of privacy  guarantees. In this paper, we do not discuss the specific ways to make a local hypothesis private \footnote{There exists a variety of definitions of privacy and their corresponding solutions for achieving so. Notable solutions include \emph{output perturbation} \cite{chaudhuri2011differentially} or \emph{output sampling} \cite{bassily2014private} to preserve privacy when differential privacy \cite{dwork2006differential} is adopted to quantify the preserved privacy level.}, but rather we focus on developing scoring functions to incentivize/elicit this ``private" and ready-to-be shared hypothesis.

 %Note that $\f_i$ can also be the optimal output from a private training algorithm. For example, in implementing a federated learning system, the principle often collects a set of hypotheses with privacy guarantees. There exists a variety of definitions of privacy and their corresponding solutions for achieving so. 

Suppose the mechanism designer has access to a dataset $D$: $D$ can be a standard training set with pairs of features and labels $D:=\{(x_n,y_n)\}_{n=1}^N$, or we are in a unsupervised setting where we don't have labels associated with each sample $x_i$: $D:=\{x_n\}_{n=1}^N$.

The \emph{goal} of the mechanism designer is to collect $\f_i$ truthfully from agent $i$. Denote the reported/contributed hypothesis from agent $i$ as $f_i$\footnote{$f_i$ can be \texttt{none} if users chose to not contribute.}. Each agent will be scored using a function $S$ that takes all reported hypotheses $f_j, \forall j$ and $D$ as inputs:
$
S\Big(f_i, \{f_{j \neq i} \}, D\Big)
$
such that it is ``proper'' at a Bayesian Nash Equilibrium:
\begin{definition}
$S(\cdot)$ is called inducing truthful reporting at a Bayesian Nash Equilibrium if for every agent $i$, assuming for all $j \neq i$, $f_j = \f_j$ (i.e., every other agent is willing to report their hypotheses
truthfully),
\[
\mathbb{E}\left[S\Big(\f_i, \{\f_{j \neq i} \}, D\Big) \right] \geq
\mathbb{E}\left[S\Big(f_i,\{\f_{j \neq i} \}, D\Big) \right],~~~~\forall f_i, %\neq \f_i
\]
where the expectation encodes agent $i$'s belief about $\{\f_{j \neq i} \}~\text{and}~ D$. % and in above context 
%$
% f_i \neq \f_i \Leftrightarrow \PP( f_i(X) \neq \f_i(X)) > 0.
%$
\end{definition}

\subsection{Peer prediction}
\label{sec:pp}
%Very relevant to us is strictly proper scoring rules
%$
%S(p,o): [0,1]\times Y \rightarrow \mathbb R,
%$
%which is defined to score a probabilistic prediction $p$ about an event' outcome $o$. $S(\cdot)$ is called strictly proper if
%$
%\E_{o \sim p} [S(p,o)] > \E_{o \sim p} [S(q,o)], ~\forall q \neq p.
%$ Proper scoring rules have primarily focused on scoring and eliciting human probabilistic forecasts, but not a machine learning predictor. 

%\subsection{Peer prediction}

\emph{Peer prediction} is a technique developed to truthfully elicit information when there is no ground truth verification.  
Suppose we are interested in eliciting private observations about a categorical event $y \in [L]$ generated according to a random variable $Y$ (in the context of a machine learning task, $Y$ can be thought of as labels). Each of the $K \geq 2$ agents holds a noisy observation of $y$, denoted as $y_i \in [L],\, i \in [K]$. Again the goal of the mechanism designer is to elicit the $y_i$s, but they are private and we do not have access to the ground truth $Y$ to perform an evaluation. %Denote by $r_i$ the reported data from each agent $i$. Results in \emph{peer prediction} have proposed scoring or reward functions that evaluate an agent's report using the reports of other peer agents. For example, a peer prediction mechanism may reward agent $i$ for her report $r_i$ using $S(r_i, r_{j \neq i})$ where $r_{j \neq i}$ is the report of a randomly selected reference agent $j \in [K]\backslash \{i\}$.
The scoring function $S$ is designed so that truth-telling is a strict Bayesian Nash Equilibrium (implying other agents truthfully report their $y_j$), that is,
 $\forall i$, $
\mathbb E_{y_j}\left[S\left( y_i, y_j \right)|y_i\right] > 
\mathbb E_{y_j}\left[S\left(r_i, y_j\right)| y_i\right],~\forall r_i \neq y_i.
$
%There has been a line of developments for peer prediction mechanisms. %Below we will introduce three recent ones which we will also highlight 

\paragraph{Correlated Agreement} 
Correlated Agreement (CA) \cite{dasgupta2013crowdsourced,2016arXiv160303151S}  is a recently established peer prediction mechanism for a multi-task setting. CA is also the core and the focus of our subsequent sections. This mechanism builds on a $\Delta$ matrix that captures the stochastic correlation between the two sources of predictions $y_i$ and $y_j$.
%Denote the following mapping function: $g(1) = -1, g(2) = +1$, 
$\Delta \in \mathbb R^{L \times L}$ is then defined as a squared matrix with its entries defined as follows:
\[
    \Delta(k,l)  = \PP\bigl(y_i=k,y_j=l\bigr)- \PP\bigl(y_i = k\bigr) \PP\bigl(y_j= l\bigr), ~k,l \in [L].
\]
The intuition of above $\Delta$ matrix is that each $(i,j)$ entry of $\Delta$ captures the marginal correlation between the two predictions.  %$M\in \mathbb R^{2 \times 2}$ is then defined  
$Sgn(\Delta)$ denotes the sign matrix of $\Delta$:$~\text{where}~Sgn(x)=1, x > 0; ~Sgn(x)=0, \text{o.w.}$ %Define the following score matrix $M^S: \{-1,+1\} \times \{-1,+1\} \rightarrow \{0,1\}:$
%\begin{align}
%M^S(y, y') =:  M\left(y,y'\right), \label{eqn:ms}
%\end{align}
 %where $g^{-1}$ is the inverse function of $g$. 

 CA requires each agent $i$ to perform multiple tasks: denote agent $i$'s observations for the $N$ tasks as $y_{i,1},...,y_{i,N}$. 
Ultimately the scoring function $S(\cdot)$ for each task $k$ that is shared between $i,j$ is defined as follows: randomly draw two other tasks $k_1,k_2~, k_1 \neq k_2 \neq k$,
\begin{align*}
S\bigl(y_{i,k},y_{j,k}\bigr) :=& Sgn\bigl(\Delta(y_{i,k}, y_{j,k})\bigr) - Sgn\bigl(\Delta(y_{i,k_1},y_{j,k_2})\bigr), 
\end{align*}
%Note a key difference between the first and second $M$ terms is that the second term is defined for two independent peer tasks $k_1,k_2$ (as the reference answers). 
It was established in \cite{2016arXiv160303151S} that CA is truthful and proper (Theorem 5.2,  \cite{2016arXiv160303151S}) \footnote{To be precise, it is an informed truthfulness. We refer interested readers to \cite{2016arXiv160303151S} for the detailed differences.}. % in particular, if $y_j$ is \emph{categorical} w.r.t. $y_i$:
$
\PP(y_j=y'|y_i = y) < \PP(y_j=y'), \forall i,j \in [K],~y' \neq y
$
then $S(\cdot) $ is strictly truthful (Theorem 4.4,  \cite{2016arXiv160303151S}). 

%\subsubsection{Surrogate scoring rules}
%Surrogate scoring rules are a recent set of mechanisms proposed to remove biases from the reference answer in scoring when the ground truth verification is missing. 
%Consider a binary signal case when $L=2$. Denote by the following error rate model
%\[
%\alpha:=\PP(y=2|y_j = 1),~~\beta:=\PP(y=1|y_j=2)
%\]
%Then suppose $S'$ is a score function the elicits $y_i$ truthfully using ground truth $y$. Then 
%\begin{align*}
%S(y_i,y_j=1) = \frac{(1-\beta)\cdot S'(y_i,1)-\alpha\cdot S'(y_i,2)}{1-\alpha-\beta}\\
%S(y_i,y_j=2) = \frac{(1-\alpha) \cdot S'(y_i,2)-\beta \cdot S'(y_i,1)}{1-\alpha-\beta}
%\end{align*}
%$S'$ can take as 1/Prior mechanism or any peer prediction mechanism defined to elicit $y_i$ truthfully with ground truth $y$.

%Surrogate scoring rules prove to be robust to agents' manipulations but require an estimation procedure to know $\alpha$ and $\beta$.

\section{Elicitation with verification}

We start by considering the setting where the mechanism designer has access to ground truth labels, i.e., $D=\{(x_n,y_n)\}_{n=1}^N$. %

%following setting such that i) each agent believes their local hypothesis is optimal within a certain hypothesis class (could be any hypothesis space); ii) the mechanism designer has access to ground truth samples $(X,Y)$.

\subsection{A warm-up case: eliciting Bayes optimal classifier}
As a warm-up, we start with the question of eliciting the Bayes optimal classifier: 
\[
\f_i = \argmin_{f_i} \mathbb{E}_{(X,Y)} \left[\mathbbm{1} \Big(f_i(X) \neq
    Y\Big) \right].
\]
It is straightforward to observe that, by definition using $-\BR(\cdot)$ (negative sign changes a loss to a reward (score)) and any affine transformation of it $a \BR(\cdot) + b, a<0$ will be sufficient to incentivize truthful reporting of hypothesis. Next we are going to show that any classification-calibrated loss function \cite{bartlett2006convexity} can serve as a proper scoring function for eliciting hypothesis.\footnote{We provide details of the calibration in the proof. Classical examples include cross-entropy loss, squared loss, etc.}

%We prove the following 
\begin{theorem}\label{thm:calibrate1}
Any classification calibrated loss function $\ell(\cdot)$ (paying agents $-\ell(f_i(X), Y)$) induces truthful reporting of the Bayes optimal classifier.
\end{theorem}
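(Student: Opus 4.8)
The plan is to exploit the fact that, in this warm-up, the Bayes optimal classifier is characterized by an \emph{unconstrained} minimization of the $0$--$1$ risk, so its optimum is attained pointwise: $\f(x) = \argmax_{y \in [L]} \PP(Y = y \mid X = x)$. This pointwise characterization is what lets me decompose the expected score and reduce the entire claim to a statement about a single conditional label distribution.

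First I would apply the tower rule to rewrite the expected score of an arbitrary reported classifier $\rf$ as
\[
\E\left[-\ell(\rf(X), Y)\right] = -\E_X\big[\, C_{p(X)}(\rf(X)) \,\big], \qquad C_p(a) := \sum_{y \in [L]} p_y\, \ell(a, y),
\]
where $p(x) = \big(\PP(Y = y \mid X = x)\big)_{y \in [L]}$ is the posterior at $x$ and $C_p(\cdot)$ is the conditional (surrogate) risk. Because there is no hypothesis-class constraint linking the values $\rf(x)$ across different points $x$ (the Bayes setting is unconstrained), maximizing the expected score is equivalent to choosing, independently for each $x$, the prediction that minimizes $C_{p(x)}(\cdot)$.

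Second I would invoke the defining property of classification calibration: $\ell$ is classification-calibrated precisely when, for every posterior $p$ with a unique mode, the minimizer of the conditional risk $C_p(\cdot)$ induces the Bayes-optimal decision $\argmax_y p_y$. Applying this at each $x$ identifies the pointwise score-maximizer with $\f(x) = \argmax_y \PP(Y = y \mid X = x)$. Integrating back over $X$ then gives $\E[-\ell(\f(X), Y)] \ge \E[-\ell(\rf(X), Y)]$ for every $\rf$, which is exactly the asserted truthfulness, and I would record the calibration conditions (with cross-entropy and squared loss as worked examples) in the footnote referenced by the statement.

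The main obstacle I anticipate is making the calibration step genuinely rigorous rather than sloganeering: I must state the calibration condition in a form that both (i) covers the multiclass label space $[L]$ rather than only binary margin losses, and (ii) correctly connects the minimizer of the real-valued surrogate risk $C_p$ to the discrete Bayes label. The degenerate case of ties (non-unique modes of $p$) also needs care, since there the Bayes classifier is optimal only up to an arbitrary tie-break; I would dispatch this by observing that at such points every mode-prediction attains the same conditional risk, so the score inequality holds with equality and the global argument survives.
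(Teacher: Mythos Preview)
Your proposal is correct but takes a genuinely different route from the paper.

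The paper's proof uses the \emph{excess-risk-bound} formulation of calibration from \cite{bartlett2006convexity}: there is a nondecreasing convex $\Psi$ with $\Psi(0)=0$ such that $\Psi(R(f)-R^*)\le R_\ell(f)-R_\ell^*$. It then argues by contradiction in two lines: if some report $f$ earned strictly more than $\f_i$, then $0>R_\ell(f)-R_\ell(\f_i)=R_\ell(f)-R_\ell^*\ge \Psi(R(f)-R^*)\ge \Psi(0)=0$. Your route instead conditions on $X$, reduces to pointwise minimization of the conditional surrogate risk $C_{p(x)}(\cdot)$, and invokes the \emph{pointwise} characterization of calibration (the conditional minimizer induces the Bayes label).

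These are the two equivalent faces of Bartlett--Jordan--McAuliffe calibration, so both arguments are legitimate. The paper's version is shorter and treats the calibration machinery as a black box, but its step ``$R_\ell(\f_i)=R_\ell^*$'' (justified only as ``by definition'') quietly assumes that the $0$--$1$ Bayes classifier also minimizes the surrogate risk---which is exactly what your pointwise decomposition supplies. Your version is thus more self-contained: it makes the unrestricted-hypothesis-class assumption explicit (this is what licenses the pointwise optimization), it extends transparently to the multiclass label space $[L]$, and it handles ties cleanly. The cost is more prose and the need to be careful, as you note, about whether $f$ outputs labels or scores when stating the calibration condition.
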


\subsection{Eliciting ``any-optimal" classifier: a peer prediction approach}

Now consider the case that an agent does not hold an absolute Bayes optimal classifier. Instead, in practice, agent's local hypothesis will depend on the local observations they have, the privacy level he desired, the hypothesis space and training method he is using. Consider agent $i$ holds the following hypothesis $\f_i$, according to a loss function $\ell_i$, and a hypothesis space $\Hy_i$:
$
\f_i = \argmin_{f_i \sim \mathcal{H}_i} \mathbb{E}\left[\ell_i \Big(f_i(X),Y\Big) \right].
$

By definition, each specific $\ell_i$ will be sufficient to incentivize a hypothesis. However, it is unclear how $\f_i$ trained using $\ell_i$ would necessarily be optimal according to a universal metric/score. We aim for a more generic approach to elicit different $\f_i$s that are returned from different training procedure and hypothesis classes. In the following sections, we provide a peer prediction approach to do so.

%\subsubsection{Peer prediction for hypothesis elicitation}

We first state the hypothesis elicitation problem as a standard peer prediction problem. The connection is made by firstly rephrasing the two data sources, the classifiers and the labels, from agents' perspective.  Let's re-interpret the ground truth label $Y$ as an ``optimal" agent who holds a hypothesis $f^*(X) = Y$. %For a task $y \in [L]$, say $l$ for example, denote the labels $Y$ as $f^*(X), X \sim \PP_{X|Y=y}$. W
We denote this agent as $\A^*$. Each local hypothesis $\f_i$ agent $i$ holds can be interpreted as the agent that observes $\f_i(x_1),...,\f_i(x_N)$ for a set of randomly drawn feature vectors $x_1,...,x_N$:
$
\f_i(x_n) \sim \A_i(X).
$ Then a peer prediction mechanism induces truthful reporting if:
$
\E\left[S(\f_i(X), f^*(X))\right] \geq \E\left[S(f(X), f^*(X))\right],~\forall f. % \neq \f_i(X)
$
%Formally we have the following theorem:
%\begin{theorem}
%A peer prediction scoring function $S$ that can elicit truthful reports from $\A_i$ using $\A^*$ will elicit $\f_i$ truthfully.
%\end{theorem}
%The above theorem is s
\paragraph{Correlated Agreement for hypothesis elicitation} To be more concrete, consider a specific implementation of peer prediction mechanism, the Correlated Agreement (CA). Recall that the mechanism builds on a correlation matrix $\Delta(\f_i(X),f^*(X))$ defined as follows:
\begin{align*}
    &\Delta^*(k,l)  = \PP\bigl(\f_i(X)=k,f^*(X)=l\bigr)- \PP\bigl(\f_i(X) = k\bigr) \PP\bigl(f^*(X)= l\bigr), ~k,l \in [L].
\end{align*}
%Then define $M$ as the sgn matrix of $\Delta^*$:
%\[
%M(i,j) = Sgn\left(\Delta^*(i,j)\right)
%\]
%where $sgn(x) = 1$ if $x \geq 0$, and $0$ otherwise. The mechanism operates as follows:
Then the CA for hypothesis elicitation is summarized in Algorithm \ref{m:main1}.
\begin{algorithm}
\caption{CA for Hypothesis Elicitation}\label{m:main1}
\begin{algorithmic}[1]
\STATE For each sample $x_n$, randomly sample two other tasks $x_{p_1} \neq x_{p_2} \neq x_n$ to pair with.
\STATE Pay a reported hypothesis $f(\cdot)$ for $x_n$ according to 
    \begin{align}
    S(f(x_n),f^*(x_n)):=Sgn\left(\Delta^*(f(x_n),f^*(x_n))\right)-Sgn\left(\Delta^*(f(x_{p_1}),f^*(x_{p_2}))\right)%\right)
    \end{align}
\STATE Total payment to a reported hypothesis $f$:
    $
    S(f,\f) := \sum_{n=1}^N S(f(x_n),f^*(x_n)).$
\end{algorithmic}
\end{algorithm}

We reproduce the incentive guarantees and required conditions:% (we drop the subscript $i$ from $f_i$ since it holds for any hypothesis $f$)
\begin{theorem}\label{thm:CA:truthful}
CA mechanism induces truthful reporting of a hypothesis at a Bayesian Nash Equilibrium.
\end{theorem}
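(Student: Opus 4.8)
The plan is to reduce the instance to a standard multi-task peer-prediction problem and then invoke the truthfulness of Correlated Agreement (Theorem~5.2 of \cite{2016arXiv160303151S}) reproduced above. First I would fix the dictionary between the two problems: each feature vector $x_n$ is a ``task'', agent $i$'s signal on task $n$ is the prediction $\f_i(x_n)$, and the reference ``agent'' $\A^*$ supplies $\f(x_n)=y_n$. Under this identification the matrix $\Delta^*$ from the excerpt is precisely the CA correlation matrix between agent $i$'s predictions and $\A^*$'s labels, and the per-task payment in Algorithm~\ref{m:main1} is exactly the CA score with $\Delta$ instantiated as $\Delta^*$, so the reduction is structurally immediate.

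Next I would compute the expected score of an arbitrary reported hypothesis $f$. Because the two penalty tasks $x_{p_1},x_{p_2}$ are drawn independently of each other and of $x_n$, the reported label $f(x_{p_1})$ and the reference label $\f(x_{p_2})$ are independent, so the expectation of the penalty term factors into a product of marginals. Averaging over the task draws yields
\[
\E\big[S(f,\f)\big]\;\propto\;\sum_{k,l}\Big(\PP\big(f(X)=k,\f(X)=l\big)-\PP\big(f(X)=k\big)\PP\big(\f(X)=l\big)\Big)\,Sgn\big(\Delta^*(k,l)\big)=\sum_{k,l}\Delta^{f}(k,l)\,Sgn\big(\Delta^*(k,l)\big),
\]
where $\Delta^{f}$ denotes the correlation matrix between the reported hypothesis and $\f$. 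Because $Sgn$ is $\{0,1\}$-valued, the right-hand side collapses to $\sum_{(k,l):\,\Delta^*(k,l)>0}\Delta^{f}(k,l)$; truthful reporting sets $\Delta^{f}=\Delta^*$ and returns $\sum_{(k,l):\,\Delta^*(k,l)>0}\Delta^*(k,l)$, the sum of the positive entries of $\Delta^*$.

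It then remains to show this truthful value is maximal. I would represent the deviation to $f$ as a row-stochastic reporting strategy $T_{k,r}=\PP\big(f(X)=r\mid \f_i(X)=k\big)$ acting on the signal, which gives $\Delta^{f}(r,l)=\sum_k T_{k,r}\,\Delta^*(k,l)$ and hence $\E[S(f,\f)]\propto\sum_{k,r}T_{k,r}\,V(k,r)$ with $V(k,r):=\sum_l\Delta^*(k,l)\,Sgn\big(\Delta^*(r,l)\big)$. Since $V(k,k)=\sum_{l:\,\Delta^*(k,l)>0}\Delta^*(k,l)$ already collects every positive entry of row $k$ while $V(k,r)$ sums the same row over a possibly different column set, a one-line comparison gives $V(k,k)\ge V(k,r)$ for all $r$; each row of $T$ is therefore best placed on the diagonal, so $T$ equal to the identity (truthful reporting) is optimal. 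This is the content of the CA guarantee, which I would cite rather than reprove.

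The step I expect to be the main obstacle is the reporting-strategy representation in the previous paragraph: justifying that reporting an alternative hypothesis $f$ is captured by a strategy acting only on the agent's own signal $\f_i(X)$ requires that $f(X)$ and $\f(X)$ be conditionally independent given $\f_i(X)$, equivalently that $\f_i(X)$ summarizes the agent's decision-relevant information. Without this the joint law of $\big(f(X),\f(X)\big)$ is not recovered from $T$ and $\Delta^*$, the deviation set is strictly larger than the reporting-strategy polytope, and the diagonal-dominance argument no longer applies verbatim; the remaining manipulations (independence of the penalty tasks and the $V(k,k)\ge V(k,r)$ comparison) are routine.
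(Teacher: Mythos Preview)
Your proposal is correct and follows essentially the same approach as the paper: both compute the expected per-task score by conditioning on the agent's true signal $\f_i(X)$, represent the deviation via a row-stochastic transition $T_{k,r}=\PP(\rf_i(X)=r\mid \f_i(X)=k)$, and conclude that the truthful value $\sum_{(k,l):\Delta^*(k,l)>0}\Delta^*(k,l)$ dominates any such mixture (the paper phrases the last step as ``the coefficient lies in $[0,1]$'', which is equivalent to your $V(k,k)\ge V(k,r)$ observation). Your flagged obstacle---that the transition-matrix representation presumes $f(X)\perp \f(X)\mid \f_i(X)$---is a genuine assumption that the paper's proof also makes implicitly without comment.
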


\paragraph{Knowledge requirement of $\Delta^*$} We'd like to note that knowing the sign of $\Delta^*$ matrix between $\f_i$ and $f^*$ is a relatively weak assumption to have to run the mechanism. For example, for a binary classification task $L=2$, define the following accuracy measure, 
\[
\FNR(f) := \PP(f(X)=2|Y=1),~\FPR(f) := \PP(f(X)=1|Y=2).
\]
We offer the following:
\begin{lemma}\label{lemma:delta}
For binary classification ($L=2$), if $\FNR(\f_i) + \FPR(\f_i) < 1$, $Sgn(\Delta^*)$ is an identify matrix. 
\end{lemma}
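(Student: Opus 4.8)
The plan is to directly compute the four entries of the $2 \times 2$ matrix $\Delta^*$ by parametrizing the joint law of $(\f_i(X), f^*(X)) = (\f_i(X), Y)$ through three scalars. First I would set $p := \PP(Y=1)$ (so $\PP(Y=2) = 1-p$), write $\alpha := \FNR(\f_i) = \PP(\f_i(X)=2 \mid Y=1)$ and $\beta := \FPR(\f_i) = \PP(\f_i(X)=1 \mid Y=2)$, and record the four joint probabilities $\PP(\f_i=1,Y=1) = (1-\alpha)p$, $\PP(\f_i=2,Y=1)=\alpha p$, $\PP(\f_i=1,Y=2)=\beta(1-p)$, $\PP(\f_i=2,Y=2)=(1-\beta)(1-p)$, together with the induced marginals $\PP(\f_i=1) = (1-\alpha)p + \beta(1-p)$ and $\PP(\f_i=2) = \alpha p + (1-\beta)(1-p)$.

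Next I would exploit the rigid structure of $\Delta^*$. Because $\Delta^*(k,l)$ is a centered second moment, summing over $l$ gives $\PP(\f_i=k) - \PP(\f_i=k)\sum_l \PP(Y=l) = 0$, and symmetrically over $k$, so each row and each column of $\Delta^*$ sums to zero. For $L=2$ this forces $\Delta^*(1,1)=\Delta^*(2,2)=-\Delta^*(1,2)=-\Delta^*(2,1)$, meaning a single entry determines the whole matrix up to sign. I would then compute just $\Delta^*(1,1) = \PP(\f_i=1,Y=1) - \PP(\f_i=1)\PP(Y=1)$ and simplify, expecting it to collapse to
\[
\Delta^*(1,1) = p(1-p)\,\bigl(1-\alpha-\beta\bigr) = p(1-p)\,\bigl(1 - \FNR(\f_i) - \FPR(\f_i)\bigr).
\]

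Finally, I would read off the signs. Under the hypothesis $\FNR(\f_i)+\FPR(\f_i)<1$ the factor $1-\alpha-\beta$ is strictly positive, and $p(1-p)>0$ for a non-degenerate label (both classes occurring with positive probability), so the two diagonal entries are strictly positive while the two off-diagonal entries are their negatives, hence strictly negative. With the stated convention $Sgn(x)=1$ for $x>0$ and $Sgn(x)=0$ otherwise, this yields $Sgn(\Delta^*) = \left(\begin{smallmatrix}1&0\\0&1\end{smallmatrix}\right)$, the identity matrix.

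The main obstacle I anticipate is not the algebra but the boundary assumption: if $Y$ is degenerate ($p\in\{0,1\}$) then $\Delta^*$ is identically zero and $Sgn(\Delta^*)$ is the zero matrix rather than the identity, so I would make explicit the implicit non-degeneracy requirement $0<p<1$. I would also double-check the zero-sum reduction so that verifying one entry genuinely suffices, since that is precisely what lets me avoid expanding all four terms separately.
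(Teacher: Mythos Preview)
Your proposal is correct and follows essentially the same direct-computation approach as the paper: both arrive at $\Delta^*(1,1)=p(1-p)\bigl(1-\FNR(\f_i)-\FPR(\f_i)\bigr)$ and read off the signs. The only difference is cosmetic: the paper expands $\Delta^*(1,1)$ and $\Delta^*(1,2)$ separately (then appeals to symmetry for the second row), whereas you invoke the zero row/column sums of $\Delta^*$ to reduce everything to a single entry---a slightly cleaner shortcut, and your explicit note that $0<p<1$ is needed is a useful clarification the paper leaves implicit.
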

$\FNR(\f_i) + \FPR(\f_i) < 1$ is stating that $\f_i$ is informative about the ground truth label $Y$ \cite{LC17}. Similar conditions can be derived for $L>2$ to guarantee an identify $Sgn(\Delta^*)$.
With identifying a simple structure of $Sgn(\Delta^*)$, the CA mechanism for hypothesis elicitation runs in a rather simple manner. 

\paragraph{When do we reward accuracy} 
The elegance of the above CA mechanism leverages the correlation between a classifier and the ground truth label. Ideally we'd like a mechanism that rewards the accuracy of the contributed classifier. Consider the binary label case:
\begin{theorem}\label{thm:accuracy}
When $\PP(Y=1) = 0.5$ (uniform prior), and let $Sgn(\Delta^* = I_{2\times 2})$ be the identity matrix, the more accurate classifier within each $\mathcal H_i$ receives a higher score. 
\end{theorem}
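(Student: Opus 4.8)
The plan is to exploit the identity structure of $Sgn(\Delta^*)$ to rewrite each per-task reward as an indicator of correct classification, and then to show that under the uniform prior the expected score is an affine, strictly increasing function of the classifier's accuracy.

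First I would substitute $Sgn(\Delta^*) = I_{2\times 2}$ directly into the per-task payment from Algorithm~\ref{m:main1}. Because the sign matrix is the identity, $Sgn\bigl(\Delta^*(\rf(x_n),\f(x_n))\bigr) = \BR\bigl(\rf(x_n) = \f(x_n)\bigr) = \BR\bigl(\rf(x_n) = Y_n\bigr)$; that is, the first (reward) term is exactly the indicator that $\rf$ labels task $n$ correctly. Taking expectation, its expected value equals the accuracy $a(\rf) := \PP\bigl(\rf(X) = Y\bigr)$.

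Next I would handle the penalty term $Sgn\bigl(\Delta^*(\rf(x_{p_1}),\f(x_{p_2}))\bigr)$. The key observation is that $x_{p_1}$ and $x_{p_2}$ are drawn as distinct tasks, so $\rf(X_{p_1})$ and $Y_{p_2}$ are independent; hence the expectation of this indicator factorizes as $\sum_{k \in [2]} \PP\bigl(\rf(X)=k\bigr)\,\PP(Y=k)$. Invoking the uniform prior $\PP(Y=1)=\PP(Y=2)=0.5$, this collapses to $0.5\bigl(\PP(\rf(X)=1)+\PP(\rf(X)=2)\bigr) = 0.5$, a constant independent of the particular classifier $\rf$. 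Summing over the $N$ tasks then gives
\[
\E\bigl[S(\rf,\f)\bigr] = N\bigl(a(\rf) - 0.5\bigr),
\]
which is strictly increasing in $a(\rf)$. Consequently, for any two hypotheses in $\mathcal H_i$ the one with higher accuracy earns the strictly higher expected score, establishing the claim.

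The place where the assumptions do the real work — and the step I would treat most carefully — is the penalty term rather than the reward term. It is tempting to dismiss $Sgn(\Delta^*(\rf(x_{p_1}),\f(x_{p_2})))$ as an irrelevant constant offset, but its constancy across classifiers is precisely what the uniform prior buys. Without $\PP(Y=1)=0.5$, the baseline $\sum_k \PP(\rf(X)=k)\PP(Y=k)$ would depend on $\rf$'s output marginals, so a more accurate classifier with a skewed prediction distribution could be penalized more and the monotonicity could fail. Isolating this dependence and showing it vanishes under the uniform prior is therefore the heart of the argument.
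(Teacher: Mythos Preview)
Your proposal is correct and follows essentially the same route as the paper's proof: substitute the identity sign matrix to reduce the reward term to $\PP(\rf(X)=Y)$, use independence of $x_{p_1}$ and $x_{p_2}$ to factor the penalty term as $\sum_k \PP(\rf(X)=k)\PP(Y=k)$, and invoke the uniform prior to collapse it to the constant $0.5$. Your closing paragraph explaining why the uniform prior is essential for the penalty term's constancy is a nice addition that the paper leaves implicit.
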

Note that the above result does not conflict with our incentive claims. In an equal prior case, misreporting can only reduce a believed optimal classifier's accuracy instead of the other way. It remains an interesting question to understand a more generic set of conditions under which CA will be able to incentivize contributions of more accurate classifiers. 

\vspace{-0.05in}
\paragraph{A market implementation} The above scoring mechanism leads to a market implementation \cite{hanson2007logarithmic} that incentivizes improving classifiers. In particular, suppose agents come and participate at discrete time step $t$. Denote the hypothesis agent contributed at time step $t$ as $\f_t$ (and his report $\rf_t$). Agent at time $t$ will be paid according to
$S(\rf_{t}(X),Y) - S(\rf_{t-1}(X),Y),~ %\label{eqn:market}
$
where $S(\cdot)$ is an incentive-compatible scoring function that elicits $f_t$ truthfully using $Y$. The incentive-compatibility of the market payment is immediate due to $S(\cdot)$. The above market implementation incentivizes improving classifiers with bounded budget \footnote{Telescoping returns: $
\sum_{t=1}^T \left(S(f_{t}(X),Y) - S(f_{t-1}(X),Y) \right) = S(f_T(X),Y)-S(f_0(X),Y)$.}. 
%This result echoes previous design as presented in \cite{abernethy2011collaborative}.

%features two favorable properties:
%\squishlist
%    \item \emph{Incentivizing improving classifiers}: due to the individual rationality, agent $t$ will only chose to contribute when he believes his classifier $\f_t$ is better than the current ``market belief" $\rf_{t-1}$.
%    \item \emph{Bounded budget:} A second favorable property of a market implementation is to provide a budget-balanced mechanism:
%\[
%\sum_{t=1}^T \left(S(f_{t}(X),Y) - S(f_{t-1}(X),Y) \right) = S(f_T(X),Y)-S(f_0(X),Y) 
%\]
%\squishend

\paragraph{Calibrated CA scores\label{sec:reward sturcture}}

%A recent pre-print studied the calibration property via replacing the 0-1 loss with an arbitrary one \citep{}. 
%Our previous section provides results for calibrated loss functions. 
When $\Delta^*$ is the identity matrix, the CA mechanism reduces to:
$$
S(f(x_n),f^*(x_n)):=\BR\left(f(x_n) = f^*(x_n)\right)-\BR\left(f(x_{p_1})=f^*(x_{p_2})\right)
$$
That is the reward structure of CA builds on 0-1 loss function. We ask the question of can we extend the CA to a calibrated one? We define the following loss-calibrated scoring function for CA:
%\begin{align}
\[\textsf{Calibrated CA:~~~}
S_{\ell}(f(x_n),f^*(x_n)) = -\ell(f(x_n),f^*(x_n)))-(-\ell(f(x_{p_1}),f^*(x_{p_2}))).~\label{ca:calibrated}
\]
%\end{align}
%We would like to understand the incentive property provided by above. 
Here again we negate the loss $\ell$ to make it a reward (agent will seek to maximize it instead of minimizing it). If this extension is possible, not only  we will be able to include more scoring functions, but also we are allowed to score/verify non-binary classifiers directly. Due to space limit, we provide positive answers and detailed results in Appendix, while we will present empirical results on the calibrated scores of CA in Section \ref{sec:exp}.

\section{Elicitation without verification}

Now we move on to a more challenging setting where we do not have ground truth label $Y$ to verify the accuracy, or the informativeness of $f(X)$, i.e., the mechanism designer only has access to a $D=\{x_n\}_{n=1}^N$. % %This makes our problem fall into the standard peer prediction setting, where the principle aims to elicit information but without verification.
The main idea of our solution from this section follows straight-forwardly from the previous section, but instead of having a ground truth agent $f^*$, for each classifier $\f_i$ we only have a reference agent $\f_j$ drawn from the rest agents $j \neq i$ to score with. The corresponding scoring rule takes the form of $S(\rf_i(X),\rf_j(X))$, and similarly the goal is to achieve the following:
$
\E\left[S(\f_i(X), \f_j(X))\right] \geq \E\left[S(f(X), \f_j(X))\right],~\forall f. %: \PP(x \in \{x: f(x) \neq \f_i(x)\}) > 0
$

As argued before, if we treat $f_i$ and $f_j$ as two agents $\A_i$ and $\A_j$ holding private information, a properly defined peer prediction scoring function that elicits $\A_i$ using $\A_j$ will suffice to elicit $f_i$ using $f_j$. Again we will focus on using Correlated Agreement as a running example. Recall that the mechanism builds on a correlation matrix $\Delta^*(\f_i(X),\f_j(X))$.
\begin{align*}
    &\Delta^*(k,l)  = \PP\bigl(\f_i(X)=k,\f_j(X)=l\bigr)- \PP\bigl(\f_i(X) = k\bigr) \PP\bigl(\f_j(X)= l\bigr), ~k,l \in [L]
\end{align*}
%and its sign matrix $M$. 
The mechanism then operates as follows: For each task $x_n$, randomly sample two other tasks $x_{p_1},x_{p_2}$. Then pay a reported hypothesis according to 
\vspace{-0.05in}
    \begin{align}
        S(\rf_i,\rf_j):= \sum_{n=1}^N Sgn\left(\Delta^*(\rf_i(x_n),\rf_j(x_n))\right)-Sgn\left(\Delta^*(\rf_i(x_{p_1}),\rf_j(x_{p_2}))\right)
    \end{align}
%\squishend
We reproduce the incentive guarantees:
\begin{theorem}
CA mechanism induces truthful reporting of a hypothesis at a Bayesian Nash Equilibrium.
\end{theorem}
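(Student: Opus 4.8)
The plan is to prove the statement by reduction to the multi-task Correlated Agreement guarantee of Section~\ref{sec:pp} (Theorem~5.2 of \cite{2016arXiv160303151S}), mirroring the proof of Theorem~\ref{thm:CA:truthful} but with the ground-truth agent $\f$ (with $\f(X)=Y$) replaced by the honest peer $\f_j$. Invoking the Bayesian Nash Equilibrium hypothesis, I fix every other agent, and in particular the reference $j$, to report truthfully, so that the reference report on task $n$ is $\f_j(x_n)$. Treating $\A_i$ and $\A_j$ as two peer-prediction agents whose signals on the $N$ tasks are $\f_i(x_1),\dots,\f_i(x_N)$ and $\f_j(x_1),\dots,\f_j(x_N)$, the scoring rule displayed above is precisely the multi-task CA mechanism applied to the signal pair $(\f_i(X),\f_j(X))$, with the covariance matrix $\Delta^*$ in the role of the CA correlation matrix $\Delta$. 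It therefore suffices to re-derive the CA optimality computation in this instantiation.

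The key steps are the following. First, I take the expectation of the total payment under agent $i$'s belief, when $i$ reports an arbitrary hypothesis $f$ and $j$ is truthful. Using linearity and the fact that the two reference tasks $x_{p_1},x_{p_2}$ are independent draws from $\D$, the penalty term factorizes into a product of marginals, giving a per-task expected score proportional to
\[
G(f):=\sum_{k,l}Sgn\bigl(\Delta^*(k,l)\bigr)\Bigl(\PP\bigl(f(X)=k,\f_j(X)=l\bigr)-\PP\bigl(f(X)=k\bigr)\PP\bigl(\f_j(X)=l\bigr)\Bigr).
\]
Second, conditioning on agent $i$'s own prediction $\f_i(X)=s$ and writing $r=f(X)$ for the reported value, the contribution of signal $s$ is proportional to $\sum_m Sgn(\Delta^*(r,m))\,\Delta^*(s,m)$, because $\Delta^*(s,m)$ and $\PP(\f_j(X)=m\mid\f_i(X)=s)-\PP(\f_j(X)=m)$ share the same sign. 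Third, since $Sgn(\Delta^*(s,\cdot))$ is, by definition of the sign matrix, the $0/1$ indicator of the strictly positive entries of row $s$, the weighted sum $\sum_m Sgn(\Delta^*(r,m))\Delta^*(s,m)$ is maximized over $r$ exactly at $r=s$; reporting $f=\f_i$ realizes $r=s$ on every task and hence maximizes $G$. This yields $\E[S(\f_i(X),\f_j(X))]\ge\E[S(f(X),\f_j(X))]$ for all $f$, with strictness under the informativeness condition $\PP(\f_j(X)=m'\mid\f_i(X)=m)<\PP(\f_j(X)=m')$ for $m'\neq m$ inherited from \cite{2016arXiv160303151S}.

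The main obstacle is that, unlike in vanilla peer prediction, agent $i$'s deviation is a whole alternative hypothesis $f$ evaluated on the features, i.e. a report that may depend on $x_n$ directly rather than only on the signal $\f_i(x_n)$; the admissible deviation class is therefore richer than the signal relabelings for which CA truthfulness is usually stated. The step I would be most careful about is showing this richer class collapses to the per-signal analysis above. This holds precisely when, in agent $i$'s belief, the peer prediction $\f_j(X)$ is conditionally independent of $X$ given $\f_i(X)$ (a self-predicting belief), so that the feature carries no information about the reference beyond the agent's own signal; the per-task maximization then depends on $x_n$ only through $s=\f_i(x_n)$, and truthful reporting is optimal task by task. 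I would emphasize that without this conditional-independence assumption the reduction can fail: an agent able to forecast $\f_j(X)$ directly from $X$ could profitably mimic the peer rather than report truthfully, so this assumption, together with the positive-correlation condition carried over from CA, is the real content of the theorem. Finally, because the reference agent is itself strategic, the guarantee is a Bayesian Nash Equilibrium rather than a dominant strategy, which is why the entire argument is conditioned on $j$'s honesty.
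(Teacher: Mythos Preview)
Your proof is correct and follows the same route as the paper, which simply says the argument is identical to that of Theorem~\ref{thm:CA:truthful} with the peer $\f_j$ in place of the ground-truth agent $f^*$; your per-task decomposition and the maximization $\sum_m Sgn(\Delta^*(r,m))\Delta^*(s,m)\le\sum_{m:\Delta^*(s,m)>0}\Delta^*(s,m)$ are exactly the computation carried out there. Your explicit flag that the hypothesis-elicitation deviation class is richer than signal relabelings, and that a conditional-independence (self-predicting) belief is needed to collapse it, is more careful than the paper, which silently writes $\PP(\rf_i(X)=r\mid \f_i(X)=k)$ and thereby implicitly restricts deviations to relabelings of the agent's own signal.
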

The proof is similar to the proof of Theorem \ref{thm:CA:truthful} so we will not repeat the details in the Appendix.

To enable a clean presentation of analysis, the rest of this section will focus on using/applying CA for the binary case $L=2$. First, as an extension to Lemma \ref{lemma:delta}, we have:
\begin{lemma}
\label{col:delta}
If $\f_i$ and $\f_j$ are conditionally independent given $Y$, $\FNR(\f_i) + \FPR(\f_i) < 1$ and $\FNR(\f_j) + \FPR(\f_j) < 1$, then $Sgn(\Delta^*)$ is an identify matrix. 
\end{lemma}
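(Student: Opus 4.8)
The plan is to reduce the claim about the full $2\times 2$ sign matrix to a single scalar inequality, and then to evaluate that scalar in closed form using the conditional independence assumption. First I would exploit the fact that $\Delta^*$ is a centered second-moment matrix of two binary indicators. Because each row and each column of $\Delta^*$ sums to zero (marginalizing $\PP(\f_i(X)=k,\f_j(X)=l)$ over one index returns the corresponding marginal, which exactly cancels the product-of-marginals term), in the binary case $L=2$ all four entries are pinned down by one number: $\Delta^*(1,1) = \Delta^*(2,2) = -\Delta^*(1,2) = -\Delta^*(2,1)$. Consequently $Sgn(\Delta^*)$ equals the identity if and only if $\Delta^*(1,1) > 0$ (the diagonal entries are then positive and the off-diagonal entries negative, hence $Sgn = 0$ there), so it suffices to establish this single inequality.

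Next I would compute $\Delta^*(1,1)$ explicitly. Writing $p := \PP(Y=1)$, $a_i := \PP(\f_i(X)=1\mid Y=1) = 1 - \FNR(\f_i)$ and $b_i := \PP(\f_i(X)=1\mid Y=2) = \FPR(\f_i)$, and analogously $a_j,b_j$ for $\f_j$, the conditional independence of $\f_i$ and $\f_j$ given $Y$ lets me factor the joint term as $\PP(\f_i(X)=1,\f_j(X)=1) = \sum_{y} \PP(\f_i(X)=1\mid Y=y)\,\PP(\f_j(X)=1\mid Y=y)\,\PP(Y=y)$. Expanding $\Delta^*(1,1)$ as joint minus product of marginals and collecting the $p^2$, $(1-p)^2$, and cross terms, everything factors cleanly into
$$\Delta^*(1,1) = p(1-p)\,(a_i - b_i)(a_j - b_j) = p(1-p)\,\bigl(1-\FNR(\f_i)-\FPR(\f_i)\bigr)\bigl(1-\FNR(\f_j)-\FPR(\f_j)\bigr).$$

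Positivity is then immediate: $p(1-p)>0$ whenever both labels occur with nonzero probability, and the two informativeness conditions $\FNR(\f_i)+\FPR(\f_i)<1$ and $\FNR(\f_j)+\FPR(\f_j)<1$ make the remaining two factors strictly positive. Hence $\Delta^*(1,1)>0$ and $Sgn(\Delta^*)$ is the identity. The step I expect to carry the weight is the factorization in the middle: conditional independence given $Y$ is precisely what turns the joint probability into a product inside the sum over $y$, and it is this product structure that makes the cross terms cancel and yields the clean product of the two ``informativeness margins.'' Without that independence the expression for $\Delta^*(1,1)$ would not factor and the sign could not be read off so directly. This also exhibits Lemma \ref{lemma:delta} as the special case where one classifier is the perfect classifier $\f$ (so $\FNR(\f)=\FPR(\f)=0$, which is trivially conditionally independent of $\f_i$ given $Y$), confirming consistency with the verified-setting result.
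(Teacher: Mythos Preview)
Your argument is correct. It differs from the paper's proof in organization: the paper does not first reduce to a single entry via the row/column zero-sum observation. Instead it expands $\Delta^*(k,l)$ using conditional independence and manipulates it into the asymmetric factorization
\[
\Delta^*(k,l) = \bigl(\PP(\f_j(X)=l\mid Y=1)-\PP(\f_j(X)=l\mid Y=2)\bigr)\cdot \Delta(k,1),
\]
where $\Delta$ is the $\f_i$--versus--$Y$ matrix from Lemma~\ref{lemma:delta}; it then checks all four $(k,l)$ cases separately, reading off the sign of $\Delta(k,1)$ from Lemma~\ref{lemma:delta} and the sign of the $\f_j$ factor from the informativeness of $\f_j$. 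Your route is more symmetric and economical: by noting up front that rows and columns of $\Delta^*$ sum to zero you collapse the four cases to one, and your closed form $\Delta^*(1,1)=p(1-p)\bigl(1-\FNR(\f_i)-\FPR(\f_i)\bigr)\bigl(1-\FNR(\f_j)-\FPR(\f_j)\bigr)$ displays the roles of $\f_i$ and $\f_j$ on equal footing. The paper's approach, on the other hand, makes the reduction to Lemma~\ref{lemma:delta} explicit, which is conceptually useful as it shows the peer-versus-peer $\Delta^*$ is literally a rescaling of the classifier-versus-truth $\Delta$. Both rely on the same implicit nondegeneracy $0<\PP(Y=1)<1$.
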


\paragraph{When do we reward accuracy}

As mentioned earlier that in general peer prediction mechanisms do not incentivize accuracy. Nonetheless we provide conditions under which they do. The result below holds for binary classifications.

\begin{theorem}\label{thm:pp:accuracy}
When (i) $\PP(Y=1) = 0.5$, (ii) $Sgn(\Delta^*) = I_{2 \times 2}$, and (iii) $\f_i(X)$ and  $\f_j(X)$ are conditional independent of $Y$, the more accurate classifier within each $\mathcal H_i$ receives a higher score in expectation. 
\end{theorem}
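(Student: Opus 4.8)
The plan is to compute directly the expected CA score of a reported classifier $f \in \mathcal{H}_i$ against the fixed, truthfully-reported peer $\f_j$, and to show that this expectation is an increasing function of the accuracy of $f$. Because condition (ii) gives $Sgn(\Delta^*) = I_{2\times 2}$, the per-task score collapses to the agreement form $S(f(x_n),\f_j(x_n)) = \BR(f(x_n)=\f_j(x_n)) - \BR(f(x_{p_1})=\f_j(x_{p_2}))$, exactly as in the verification case but with $\f_j$ replacing $f^*$. Taking expectations, and using that the two penalty tasks $x_{p_1},x_{p_2}$ are drawn as independent samples (so $f(x_{p_1})$ and $\f_j(x_{p_2})$ are independent with their respective marginals), the expected per-task score equals $\PP(f(X)=\f_j(X)) - \sum_{k}\PP(f(X)=k)\PP(\f_j(X)=k) = \sum_k \Delta^*(k,k) = \mathrm{trace}(\Delta^*)$, and the total expected score is $N$ times this quantity.

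The core step is to evaluate $\mathrm{trace}(\Delta^*)$ under conditions (i) and (iii). First I would record the standard binary fact that the four entries satisfy $\Delta^*(1,1)=\Delta^*(2,2)=-\Delta^*(1,2)=-\Delta^*(2,1)=\mathrm{Cov}(\BR[f(X)=1],\BR[\f_j(X)=1])$, so $\mathrm{trace}(\Delta^*)=2\,\mathrm{Cov}(\BR[f(X)=1],\BR[\f_j(X)=1])$. Writing $u=\PP(f(X)=1\mid Y=1)=1-\FNR(f)$, $v=\PP(f(X)=1\mid Y=2)=\FPR(f)$ and analogously $u_j,v_j$ for $\f_j$, the conditional independence of (iii) together with the uniform prior of (i) lets me factor the covariance cleanly: $\E[\BR[f=1]\BR[\f_j=1]] = \tfrac12(u u_j + v v_j)$ while the product of marginals is $\tfrac14(u+v)(u_j+v_j)$, and the cross terms cancel to give $\mathrm{Cov} = \tfrac14(u-v)(u_j-v_j)$. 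Hence $\mathrm{trace}(\Delta^*) = \tfrac12\,(1-\FNR(f)-\FPR(f))\,(1-\FNR(\f_j)-\FPR(\f_j))$.

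It then remains to connect this to accuracy. Under the uniform prior, $\mathrm{Acc}(f) = 1 - \tfrac12(\FNR(f)+\FPR(f)) = \tfrac12\bigl(1 + b(f)\bigr)$ where $b(f):=1-\FNR(f)-\FPR(f)$, so accuracy is an increasing affine function of the informativeness term $b(f)$; let $b_j := 1-\FNR(\f_j)-\FPR(\f_j)$. The total expected score is therefore $\tfrac{N}{2}\,b(f)\,b_j$ with $b_j$ fixed. To orient the monotonicity I would use condition (ii): since $\Delta^*(1,1)=\tfrac14 b(f)\,b_j>0$, the two informativeness terms share a sign, and under the natural informativeness orientation (the setting of Lemma \ref{col:delta}) both $b(f)>0$ and $b_j>0$. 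With $b_j>0$ held fixed, a more accurate $f$ means a larger $b(f)$, which yields a strictly larger $\tfrac{N}{2}\,b(f)\,b_j$, establishing the claim.

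I expect the main obstacle to be the combination of the covariance factorization and, more conceptually, pinning down the sign of the reference term $b_j$: the monotonicity conclusion is valid \emph{only} because $b_j>0$, and were the peer ``anti-informative'' ($b_j<0$) the ranking would reverse. Ensuring $b_j>0$ is precisely what condition (ii) (together with (iii)) buys, so the care lies in invoking it rather than silently assuming it. The remaining manipulations --- reducing the penalty term to a product of marginals via task independence, and the affine accuracy identity --- are routine.
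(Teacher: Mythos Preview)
Your argument is correct. Reducing the identity-sign CA score to the agreement form, recognizing the expected per-task score as $\mathrm{trace}$ of the $(f,\f_j)$ correlation matrix, and factoring the covariance under conditional independence and the uniform prior to obtain $\tfrac12\,b(f)\,b_j$ is clean and valid; the link $\mathrm{Acc}(f)=\tfrac12(1+b(f))$ then finishes it once $b_j>0$.

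The paper takes a different organizational route. Rather than computing the covariance directly, it first establishes the affine reduction
\[
\E\bigl[S(f(X),\f_j(X))\bigr]=(1-\alpha_j-\beta_j)\,\E\bigl[S(f(X),Y)\bigr]
\]
for \emph{any} prior $p=\PP(Y=1)$ (not just $p=\tfrac12$), and only then invokes condition~(i) together with the verification-case identity $\E[S(f(X),Y)]=\PP(f(X)=Y)-\tfrac12$ from Theorem~\ref{thm:accuracy}. Your trace computation is the $p=\tfrac12$ specialization of the same algebra; indeed for general $p$ one gets $\mathrm{trace}=2p(1-p)\,b(f)\,b_j$, which coincides with $(1-\alpha_j-\beta_j)\cdot\E[S(f(X),Y)]$. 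The payoff of the paper's route is that the affine relation (their Eqn.~(\ref{eqn:affine})) is a standalone lemma reused verbatim in the proofs of Theorems~\ref{thm:market:pp1} and~\ref{thm:robust}; your direct covariance factorization is tighter for this theorem alone but does not yield that reusable intermediate. One small notational point: you overload $\Delta^*$ for both the $(\f_i,\f_j)$ matrix of condition~(ii) and the $(f,\f_j)$ matrix whose trace you compute---worth distinguishing. On the sign of $b_j$, your caveat is well placed: condition~(ii) only forces $b(\f_i)\,b_j>0$, so one needs the informativeness convention $b(\f_i)>0$ to conclude $b_j>0$; the paper makes the same implicit move.
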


\subsection{Peer Prediction market}

Implementing the above peer prediction setting in a market setting is hard, due to again the challenge of no ground truth verification. The use of reference answers collected from other peers to similarly close a market will create incentives for further manipulations. 

Our first attempt is to crowdsource to obtain an independent survey answer and use the survey answer to close the market. Denote the survey hypothesis as $ f'$ and use $f'$ to close the market:
\begin{align}\label{eqn:crowd:market}
    S(\rf_{t}(x),f'(x)) - S(\rf_{t-1}(X),f'(x))
\end{align}
\begin{theorem}\label{thm:market:pp1}
When the survey hypothesis $f'(x)$ is (i) conditionally independent from the market contributions, and (ii) Bayesian informative, %[Definition \ref{def:BI}, replacing $f$ with $f'$], 
then closing the market using the crowdsourcing survey hypothesis is incentive compatible. 
\end{theorem}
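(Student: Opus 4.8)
The plan is to reduce the market's incentive compatibility to the single-shot properness of the underlying CA scoring rule via a separation (telescoping) argument, and then to check that conditions (i) and (ii) are precisely what make CA with the survey hypothesis $f'$ as the reference agent truthful.

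First I would observe that the market payment to the agent arriving at time $t$,
\[
S(\rf_t(x), f'(x)) - S(\rf_{t-1}(X), f'(x)),
\]
splits into a term depending on the current report $\rf_t$ and a term $S(\rf_{t-1}(X), f'(x))$ depending only on the report $\rf_{t-1}$ already submitted at the previous step. From agent $t$'s vantage point the latter is a constant (crucially, condition (i) guarantees that $f'$ is not something agent $t$ can influence, so its expectation is fixed regardless of $\rf_t$). Hence maximizing the expected market payment is equivalent to maximizing $\E[S(\rf_t(x), f'(x))]$ under agent $t$'s belief, and it suffices to show that the stage rule $S(\cdot, f'(\cdot))$ elicits $\f_t$ truthfully.

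Second, I would recast this stage problem as exactly the hypothesis-elicitation-without-verification problem of the preceding subsection, with $f'$ playing the role of the reference peer $\f_j$. The rule $S$ is the CA rule built on the correlation matrix $\Delta^*$ between $\rf_t(X)$ and $f'(X)$, so by the CA truthfulness result (the no-verification analogue of Theorem \ref{thm:CA:truthful}) truth-telling is a Bayesian Nash equilibrium as soon as $Sgn(\Delta^*)$ has the structure CA requires. Condition (i), conditional independence of $f'$ from the market contributions given $Y$, lets me factor the joint law of $(\rf_t(X), f'(X))$ through $Y$ and recover a clean $\Delta^*$ in the same spirit as Lemma \ref{col:delta}; condition (ii), Bayesian informativeness of $f'$, forces this correlation to point in the truth-rewarding direction, so that agreeing with $f'$ is strictly more valuable under $\f_t$ than under any deviation. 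Composing the separation argument with this verified CA truthfulness gives the claim.

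The main obstacle is this last step: translating the qualitative hypothesis "$f'$ is Bayesian informative" into the exact sign condition on $\Delta^*$ that the CA theorem consumes, and confirming that conditional independence (rather than plain independence) still controls the off-diagonal entries of $\Delta^*$. I would also want to make the informal worry that motivated the theorem precise — namely that a peer-sourced closing report invites manipulation — and show it is dissolved here exactly because $f'$ is crowdsourced independently of the participants, so no agent can shift $f'$ to game the reference, which is what makes the ``constant second term'' in the first step legitimate.
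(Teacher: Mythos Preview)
Your separation step (dropping the $\rf_{t-1}$ term as a constant from agent $t$'s viewpoint) matches the paper, but your second step takes a different route. Rather than re-invoking the no-verification CA theorem with $f'$ as peer and then checking the sign of the associated $\Delta^*$ via a Lemma~\ref{col:delta}-style argument, the paper uses the affine identity established inside the proof of Theorem~\ref{thm:pp:accuracy} (Eqn.~(\ref{eqn:affine})): under conditional independence of $f'$ and any $f$ given $Y$, one has $\E[S(f(X),f'(X))]=(1-\alpha'-\beta')\,\E[S(f(X),Y)]$. Applying this to both market terms rewrites the expected payment as a scalar multiple of the with-verification market payment, and Bayesian informativeness of $f'$ is precisely the condition $1-\alpha'-\beta'>0$, so incentive compatibility is inherited in one line from the ground-truth case. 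This dissolves the ``main obstacle'' you flag: you never have to pin down $Sgn(\Delta^*)$ for the pair $(\f_t,f')$ --- which via Lemma~\ref{col:delta} would additionally require $\f_t$ itself to be Bayesian informative, an assumption not listed in the theorem --- because the reduction lands directly on scoring against $Y$. Your route is not wrong, but it is less direct and leans on an extra implicit hypothesis that the paper's affine reduction sidesteps.
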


The above mechanism is manipulable in several aspects. Particularly, the crowdsourcing process needs to be independent from the market, which implies that the survey participant will need to stay away from participating in the market - but it is unclear whether this will be the case. In the Appendix we show that by maintaining a survey process that elicits $C>1$ hypotheses, we can further improve the robustness of our mechanisms against agents performing a joint manipulation on both surveys and markets.

\paragraph{Remark} Before we conclude this section, we remark that the above solution for the \emph{without verification} setting also points to an hybrid solution when the designer has access to both sample points with and without ground truth labels. The introduction of the pure peer assessment solution helps reduce the variance of payment.

\subsection{Robust elicitation}
 
Running a peer prediction mechanism with verifications coming only from peer agents is vulnerable when facing collusion. In this section we answer the question of how robust our mechanisms are when facing a $\gamma$-fraction of adversary in the participating population. To instantiate our discussion, consider the following setting
\squishlist
    \item There are $1-\gamma$ fraction of agents who will act truthfully if incentivized properly. Denote the randomly drawn classifier from this $1-\gamma$ population as $\f_{1-\gamma}$.
    \item There are $\gamma$ fraction of agents are adversary, whose reported hypotheses can be arbitrary and are purely adversarial. 
\squishend
Denote the following quantifies
$
    \alpha := \PP(\f_{1-\gamma}(X)=2|Y=1),~\beta := \PP(\f_{1-\gamma}(X)=1|Y=2)~\alpha^* := \PP(\f(X)=2|Y=1),~\beta^* := \PP(\f(X)=1|Y=2)
$, that is $\alpha,\beta$ are the error rates for the eliciting classifier $\f_{1-\gamma}$ while $\alpha^*,\beta^*$ are the error rates for the Bayes optimal classifier. We prove the following 
\begin{theorem}\label{thm:robust}
CA is truthful in eliciting hypothesis when facing $\gamma$-fraction of adversary when $\gamma$ satisfies: %the following:
$
\frac{1-\gamma}{\gamma} > \frac{1-\alpha^*-\beta^*}{1-\alpha-\beta}. 
$
\end{theorem}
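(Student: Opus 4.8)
The plan is to reduce the robustness question to the sign structure of the effective $\Delta^*$ matrix and then bound the worst-case distortion the adversarial fraction can inject. By Theorem~\ref{thm:CA:truthful} (and its without-verification analogue), CA elicits the hypothesis truthfully as long as the matrix $\Delta^*$ computed between an honest agent $i$'s report and its reference peer satisfies $Sgn(\Delta^*)=I_{2\times 2}$. With a $\gamma$-fraction of adversaries in the pool, the reference peer that an honest agent is scored against is effectively a \emph{mixture}: with probability $1-\gamma$ it is an honest classifier $\f_{1-\gamma}$ (rates $\alpha,\beta$), and with probability $\gamma$ it is an arbitrary adversarial report. So the first step is to recompute $\Delta^*$ with this mixture playing the role of the second source.

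Second, I would invoke the binary factorization of $\Delta^*$ under conditional independence given $Y$. The same expansion that underlies Lemma~\ref{lemma:delta} and Lemma~\ref{col:delta} gives, for two sources $g,h$ conditionally independent given $Y$,
\[
\Delta^*(1,1)=\PP(Y=1)\PP(Y=2)\,\bigl(1-\FNR(g)-\FPR(g)\bigr)\bigl(1-\FNR(h)-\FPR(h)\bigr),
\]
and analogously for the other entries, so that $Sgn(\Delta^*)=I_{2\times 2}$ exactly when the two informativeness factors $1-\FNR-\FPR$ share the same (positive) sign. Since the honest agent being incentivized is informative, $1-\alpha-\beta>0$, the diagonal sign is dictated solely by the \emph{informativeness of the mixed reference}. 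Because informativeness is linear in the conditional law of the reference, the mixture's informativeness is the convex combination
\[
(1-\gamma)\bigl(1-\alpha-\beta\bigr)+\gamma\bigl(1-\FNR_{\mathrm{adv}}-\FPR_{\mathrm{adv}}\bigr).
\]

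Third, I would identify the worst-case adversary as the one minimizing its own informativeness term, and bound how negative that term can be. The key fact is that no classifier can be more anti-correlated with $Y$ than the flip of the Bayes optimal classifier: for every classifier $g$, $|1-\FNR(g)-\FPR(g)|\le 1-\alpha^*-\beta^*$, with the extreme value $-(1-\alpha^*-\beta^*)$ attained by reporting the negation of $\f$. Substituting this worst case, the reference stays net-informative (the diagonal of $\Delta^*$ stays positive, $Sgn(\Delta^*)=I_{2\times 2}$) precisely when
\[
(1-\gamma)\bigl(1-\alpha-\beta\bigr)-\gamma\bigl(1-\alpha^*-\beta^*\bigr)>0,
\]
which rearranges to the claimed threshold $\frac{1-\gamma}{\gamma}>\frac{1-\alpha^*-\beta^*}{1-\alpha-\beta}$. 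Invoking the CA truthfulness guarantee once more then closes the argument.

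I expect the extremal bound in the third step to be the main obstacle: one must argue that $1-\alpha^*-\beta^*$ upper bounds the achievable magnitude of $|1-\FNR-\FPR|$ over all classifiers, i.e.\ that the Bayes optimal classifier maximizes correlation with $Y$ in this balanced-accuracy sense. This is clean under a uniform prior, where the Bayes rule minimizes $\FNR+\FPR$; and since the diagonal sign of $\Delta^*$ is prior-independent (the $\PP(Y=1)\PP(Y=2)$ prefactor is always positive), the remaining care is only in justifying this extremal characterization. A secondary subtlety is confirming that the mixture reference can be treated as conditionally independent of agent $i$ given $Y$, so that the rank-one factorization legitimately applies to the adversarial component.
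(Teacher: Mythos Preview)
Your approach is correct and closely parallels the paper's, though you route through a different intermediate lemma. The paper does not recompute $\Delta^*$ against the contaminated reference; instead it applies the affine identity $\E[S(f(X),f'(X))]=(1-\alpha'-\beta')\,\E[S(f(X),Y)]$ (Eqn.~\eqref{eqn:affine}) separately to the honest and adversarial components of the mixture, so the expected score against the mixed peer is exactly $\bigl[(1-\gamma)(1-\alpha-\beta)+\gamma(1-\hat\alpha-\hat\beta)\bigr]\cdot\E[S(f(X),Y)]$, and truthfulness follows once the bracketed scalar is positive. Your $\Delta^*$-factorization route produces the identical scalar as the mixture's informativeness, so the two arguments converge at the same inequality; the paper's version is marginally more direct because it works purely at the score level and does not need to isolate agent $i$'s own informativeness factor. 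Both proofs then bound the adversary by $1-\hat\alpha-\hat\beta\ge-(1-\alpha^*-\beta^*)$ via the ``reverse the adversary and contradict Bayes optimality'' argument you sketch, and both tacitly assume the adversarial report is conditionally independent of agent $i$ given $Y$---a point the paper glosses over but you correctly flag. One small notational slip: when you write ``the honest agent being incentivized is informative, $1-\alpha-\beta>0$'', note that $\alpha,\beta$ in the paper are the error rates of the \emph{honest reference} $\f_{1-\gamma}$, not of agent $i$; your factorization needs the separate assumption that $\f_i$ itself is informative at that step.
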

When the agent believes that the classifier the $1-\gamma$ crowd holds is as accurate as the Bayes optimal classifier we have
$
\frac{1-\alpha^*-\beta^*}{1-\alpha-\beta} = \frac{1-\alpha^*-\beta^*}{1-\alpha^*-\beta^*} = 1
$, then a sufficient condition for eliciting truthful reporting is $\gamma < 50\%$, that is our mechanism is robust up to half of the population manipulating. Clearly the more accurate the reference classifier is, the more robust our mechanism is.

%\newpage

%, per the following theorem
%\begin{theorem}
%By assigning each agent $N$ non-ground truth ones and scoring them using our peer prediction approach, 
%\end{theorem}

\section{Experiments}\label{sec:exp}

In this section, we implement two reward structures of CA: 0-1 score and Cross-Entropy (CE) score as mentioned at the end of Section~\ref{sec:reward sturcture}. We experiment on two image classification tasks: MNIST~\cite{mnist} and CIFAR-10~\cite{cifar} in our experiments. For agent $\A_{W}$ (weak agent), we choose LeNet~\cite{mnist} and ResNet34~\cite{resnet} for MNIST and CIFAR-10 respectively. For $\A_{S}$ (strong agent), we use a 13-layer CNN architecture for both datasets.

% We consider two user-side machine learning models: LeNet~\cite{mnist} and ResNet34~\cite{resnet} as agent $\A_{W}$ (weak agent) and $\A_{S}$ (strong agent).
Either of them is trained on random sampled 25000 images from each image classification training task. After the training process, agent $\A_{W}$ reaches 99.37\% and 62.46\% test accuracy if he truthfully reports the prediction on MNIST and CIFAR-10 test data. Agent $\A_{S}$ is able to reach 99.74\% and 76.89\% test accuracy if the prediction on MNIST and CIFAR-10 test data is truthfully reported. 

$\A_{W}$ and $\A_{S}$ receive hypothesis scores based on the test data $X_{\text{test}}$ (10000 test images) of MNIST or CIFAR-10. For elicitation with verification, we use ground truth labels to calculate the hypothesis score. For elicitation without verification, we replace the ground truth labels with the other agent's prediction - $\A_{W}$ will serve as $\A_{S}$'s peer reference hypothesis and vice versa. 

% We focus on three kinds of misreport models: uniform off-diagonal models, sparse models and adversarial models.
% \yl{the misreporting should be defined for $\pp(\tilde{f}_i=j|\f_i = i)$, i.e., the probability of flipping the truthful classifier}

\subsection{Results} \label{sec:exp_1}
Statistically, an agent $i$'s mis-reported hypothesis can be expressed by a misreport transition matrix $T$. Each element $T_{j,k}$ represents the probability of flipping the truthfully reported label $\f_i(x) = j$ to the misreported label $\tilde{f}_i(x)=k$: $T_{j,k}=\PP(\tilde{f}_i(X)=k|\f_i(X) = j)$. Random flipping predictions will degrade the quality of a classifier. When there is no adversary attack, we focus on two kinds of misreport transition matrix: a uniform matrix or a sparse matrix. For the uniform matrix, we assume the probability of flipping from a given class into other classes to be the same: $T_{i,j}=T_{i,k}=e, \forall i\neq j\neq k$. $e$ changes gradually from 0 to 0.56 after 10 increases, which results in a 0\%--50\% misreport rate. The sparse matrix focuses on particular 5 pairs of classes which are easily mistaken between each pair. Denote the corresponding transition matrix elements of class pair $(i,j)$ to be: $(T_{ij}, T_{ji}), i\neq j$, we assume that $T_{ij}=T_{ji}=e, \forall (i,j)$. $e$ changes gradually from 0 to 0.5 after 10 increases, which results in a 0\%--50\% misreport rate.

%  For instance, an image of horse may look like a deer, while it is unlikely to be mistaken as a ship. Thus, we choose 5 pairs for either dataset

% \subsection{Uniform off-diagonal misreport model:}
% In certain real world scenarios, an agent refuses to truthfully report the prediction by randomly selecting another different class as the prediction. To simulate this case, we assume that 
  Every setting is simulated 5 times. The line in each figure consists of the median score of 5 runs as well as the corresponding ``deviation interval", which is the maximum absolute score deviation. The y axis symbolizes the averaged score of all test images.
  
  As shown in Figure~\ref{Fig:fig1_1},~\ref{Fig:fig2}, in most situations, 0-1 score and CE score of both $\A_{W}$ and $\A_{S}$ keep on decreasing while the misreport rate is increasing. As for 0-1 score without ground truth verification, the score of either agent begins to fluctuate more when the misreport rate in sparse misreport model is $>35\%$. %, which can be considered as a ``lower bound" for the s. \yl{what does lower bound mean?} \wjh{Once the misreport rate reaches such a ``lower bound", increasing the misreport rate does not have obvious impacts on the hypothesis score and is meaningless.} Thus, 
  Our results conclude that both the 0-1 score and CE score induce truthful reporting of a hypothesis and will penalize misreported agents whether there is ground truth for verification or not.

\begin{figure}[t]
%\makebox[7pt]{\raisebox{39pt}{\rotatebox[origin=c]{90}{\small{XXX}}}}%
    \centering
    \subfigure[\scriptsize 0-1 Score, Agent $A_{W}$]
{\includegraphics[width=.26\textwidth]{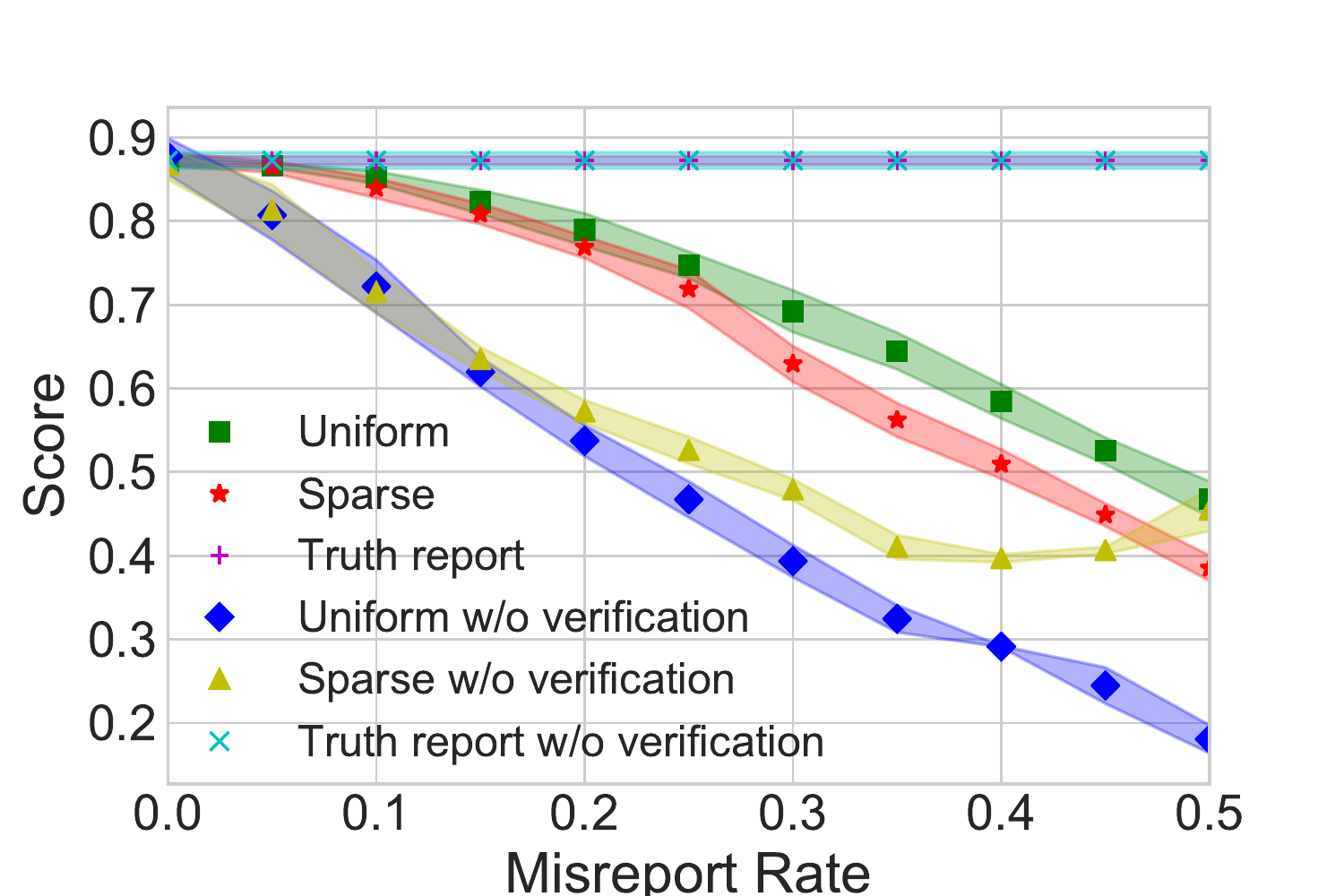}
\label{Fig: 1a}}
    \hspace{-0.28in}
    \subfigure[\scriptsize 0-1 Score, Agent $A_{S}$]
    {\includegraphics[width=.26\textwidth]{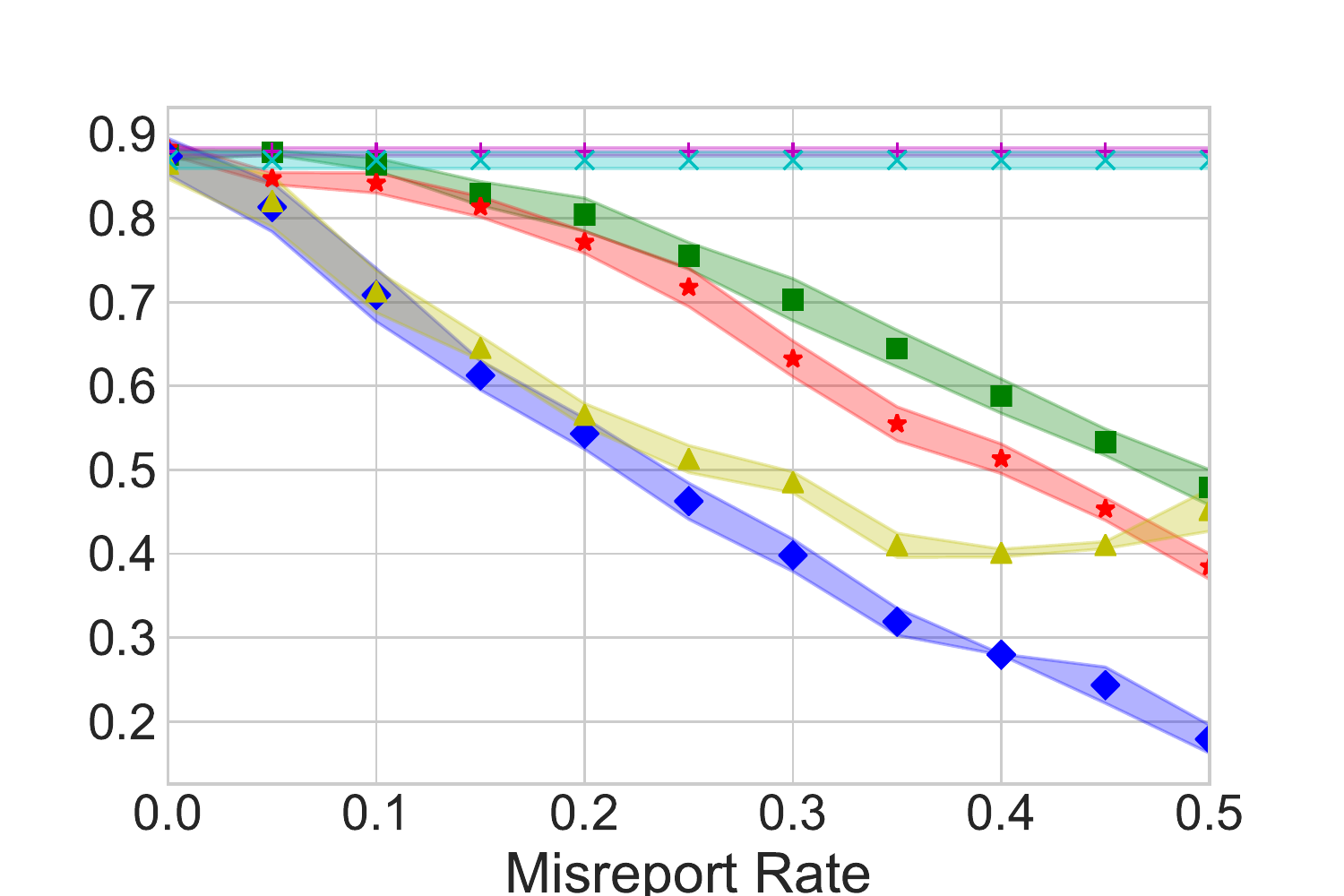}
    \label{Fig: 1b}
    }
    \hspace{-0.28in}
    \subfigure[\scriptsize CE Score, Agent $A_{W}$]
    {\includegraphics[width=.26\textwidth]{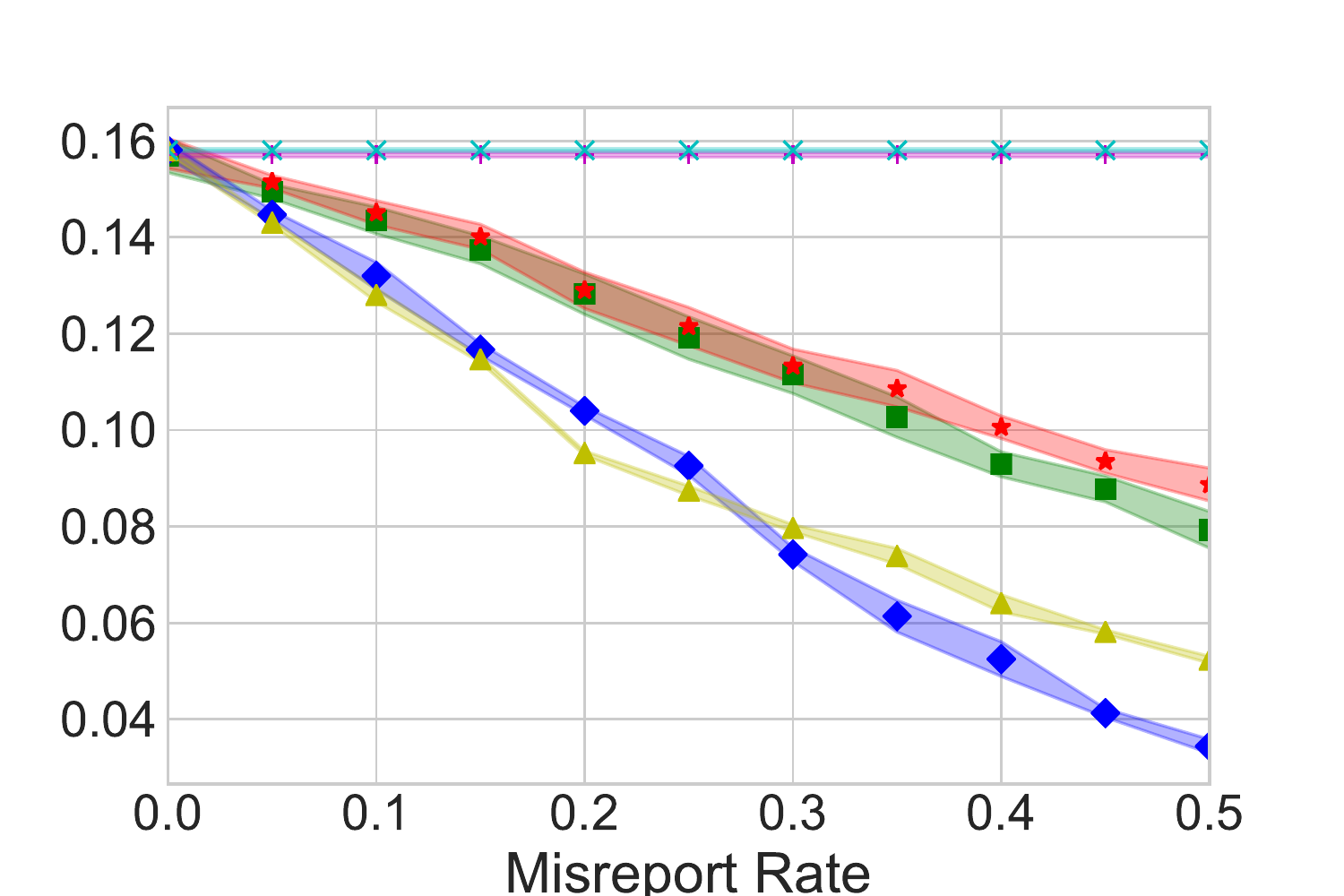}
    \label{Fig: 1c}}
    \hspace{-0.28in}
    \subfigure[\scriptsize CE Score, Agent $A_{S}$]
    {\includegraphics[width=.26\textwidth]{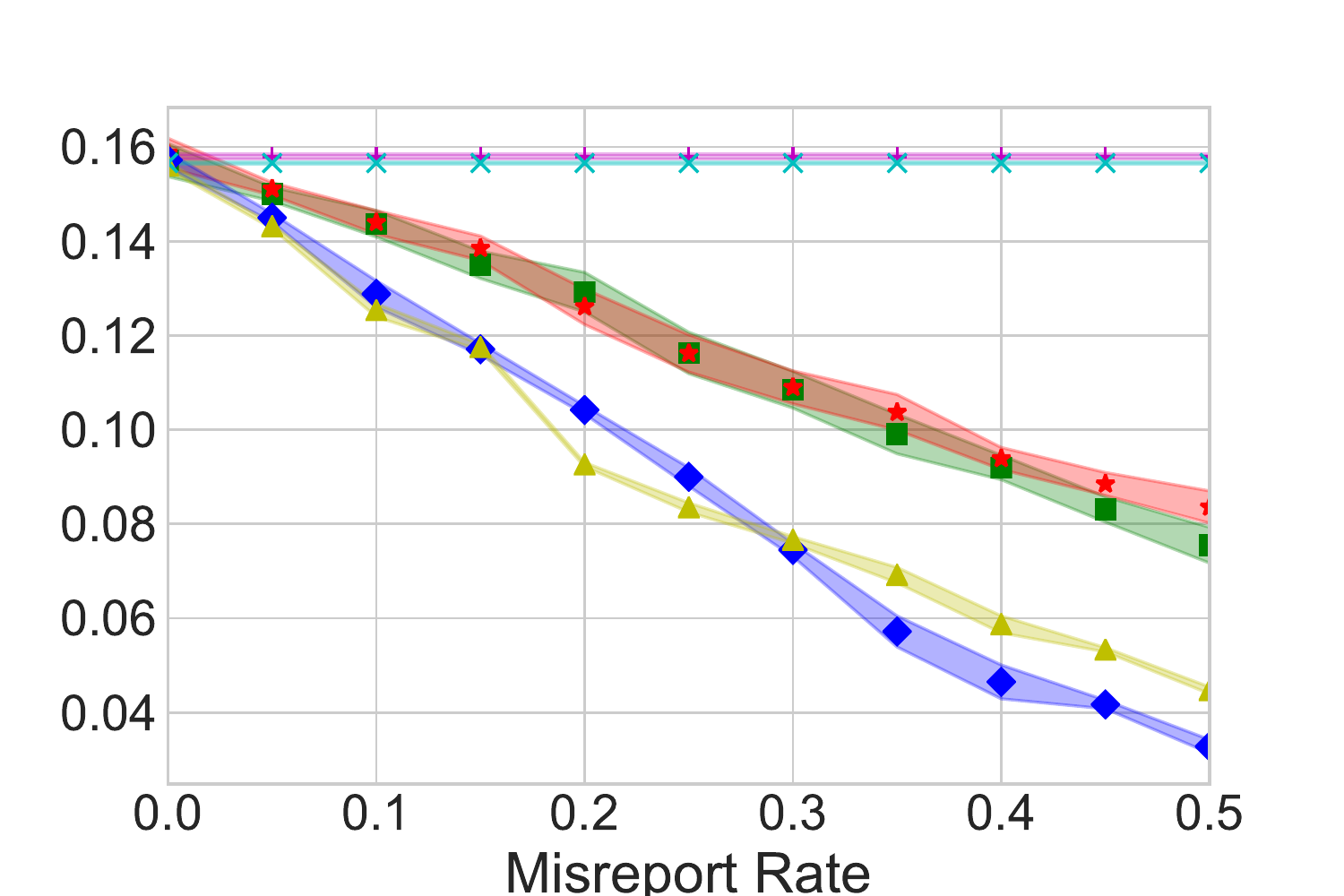}
    \label{Fig: 1d}}

        \vspace{-5pt}
        \caption{Hypothesis scores versus misreport rate on MNIST dataset.
        % \vspace{-18pt}
    }
    \label{Fig:fig1_1}
\end{figure}

\begin{figure}[t]
%\makebox[7pt]{\raisebox{39pt}{\rotatebox[origin=c]{90}{\small{XXX}}}}%
    \centering
    \subfigure[\scriptsize 0-1 score, agent $A_{W}$]
{\includegraphics[width=.26\textwidth]{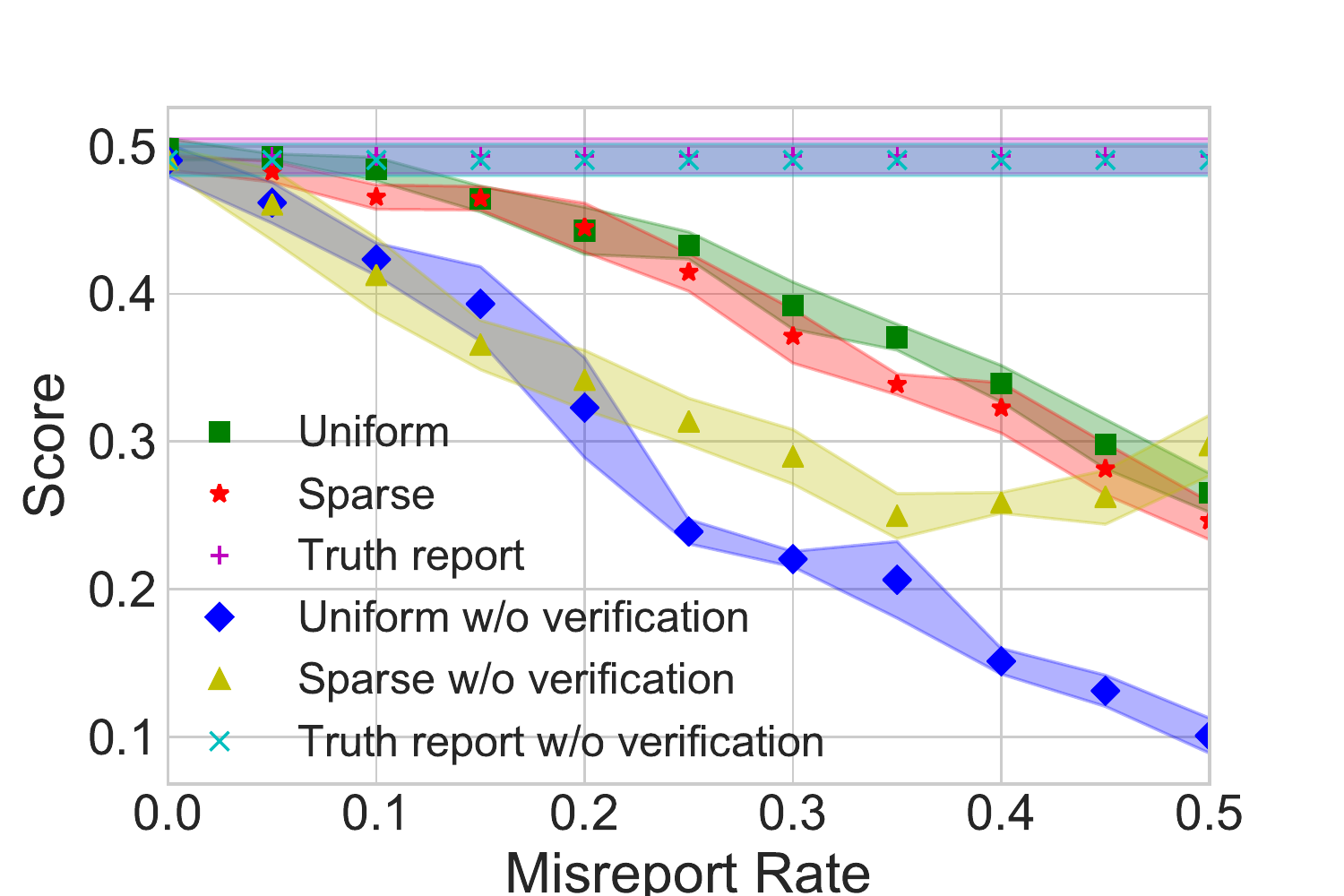}
\label{Fig: 2a}}
    \hspace{-0.28in}
    \subfigure[\scriptsize 0-1 score, agent $A_{S}$]
    {\includegraphics[width=.26\textwidth]{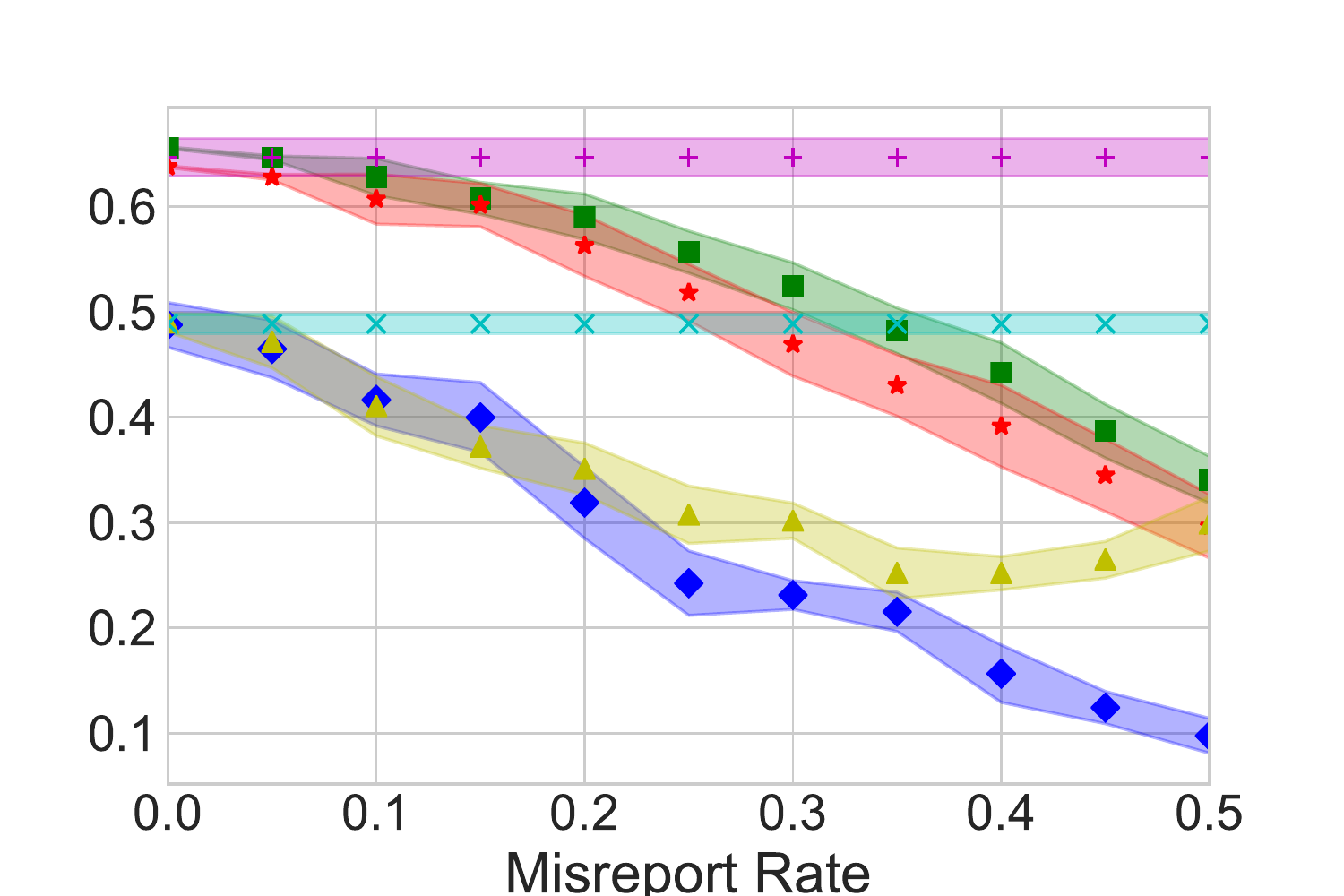}
    \label{Fig: 2b}}
    \hspace{-0.28in}
    \subfigure[\scriptsize CE score, agent $A_{W}$]
    {\includegraphics[width=.26\textwidth]{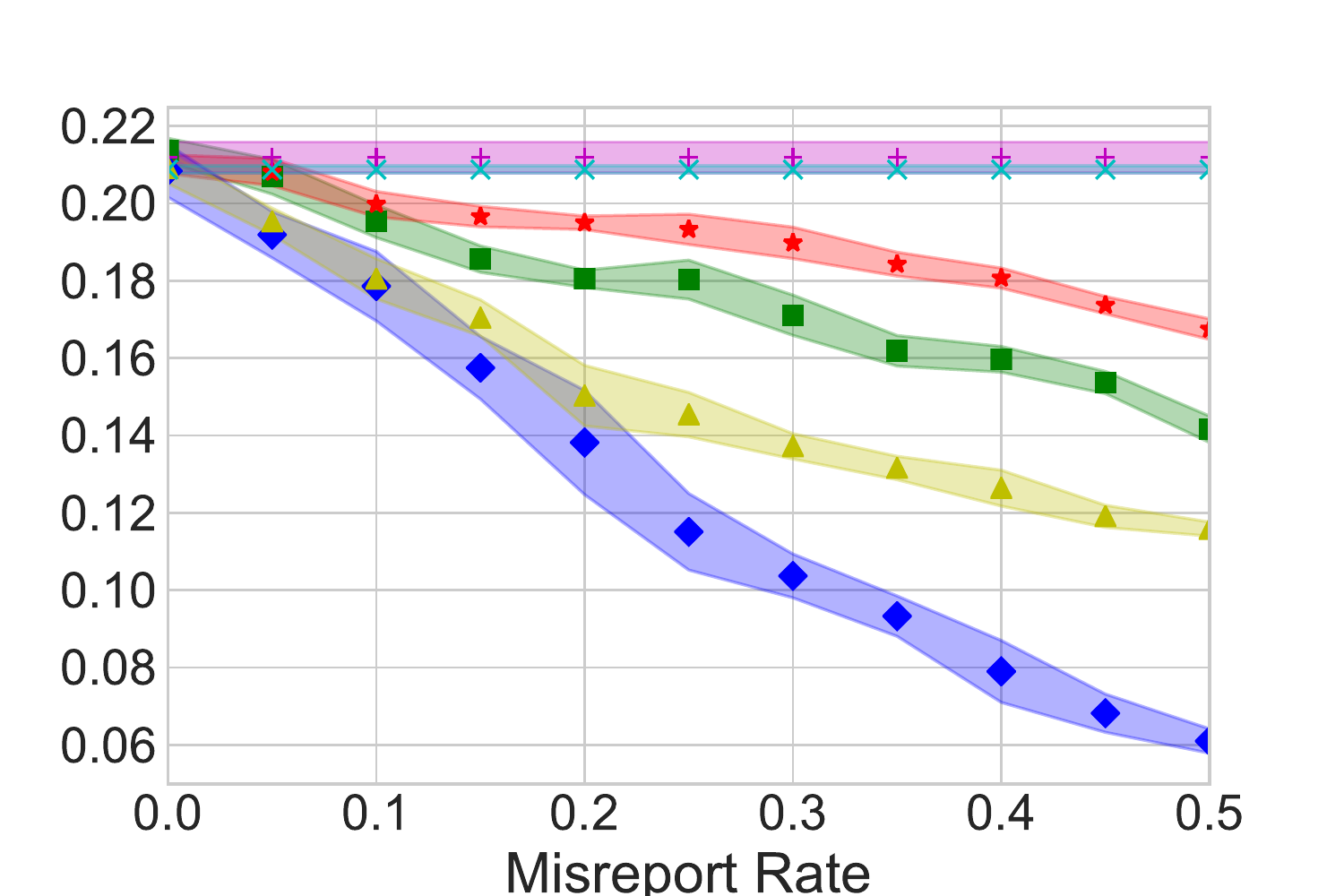}
    \label{Fig: 2c}}
    \hspace{-0.28in}
    \subfigure[\scriptsize CE score, agent $A_{S}$]
    {\includegraphics[width=.26\textwidth]{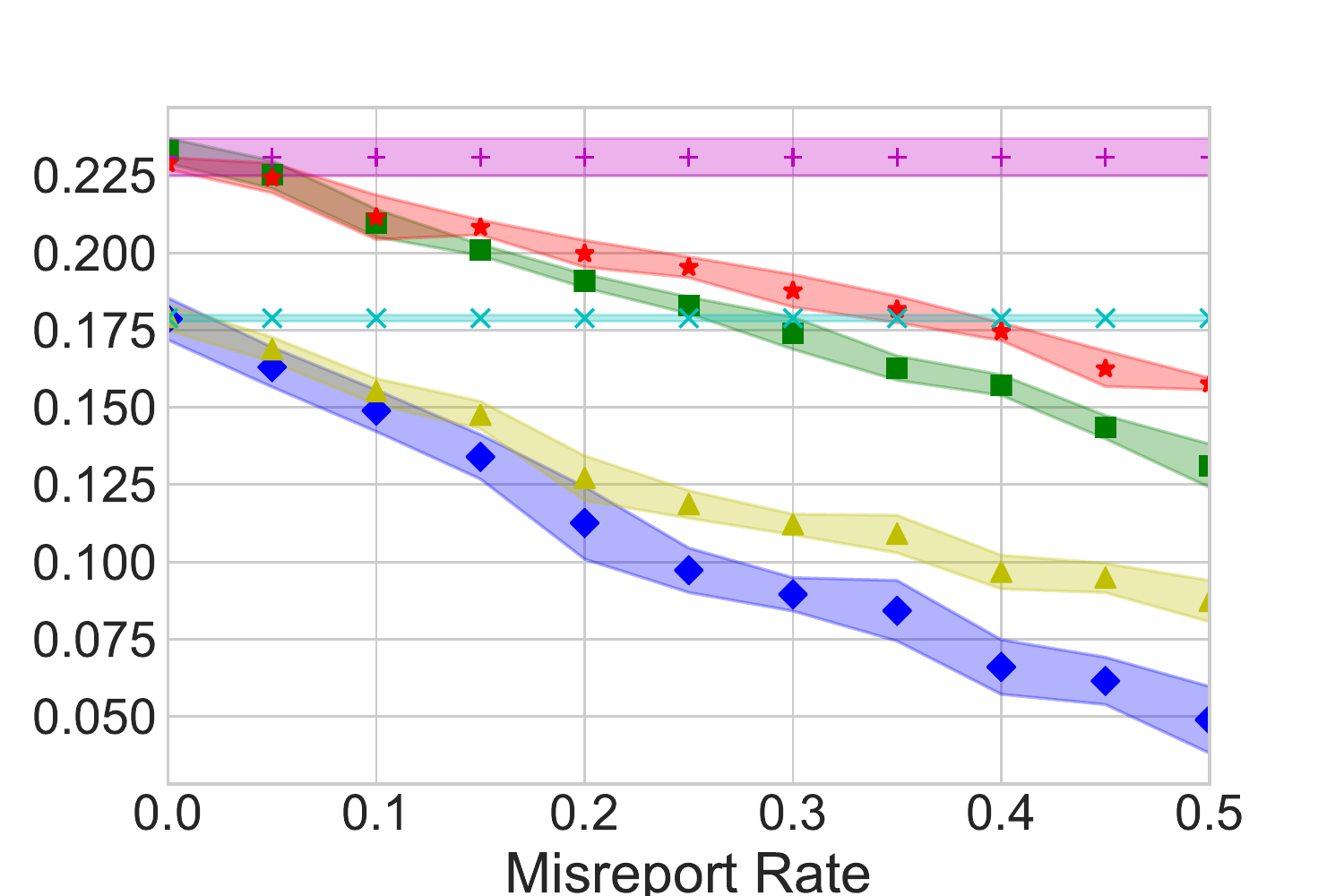}
    \label{Fig: 2d}}

        \vspace{-5pt}
        \caption{Hypothesis scores versus misreport rate on CIFAR-10 dataset. 
        % \vspace{-18pt}
    }
    \label{Fig:fig2}
\end{figure}

\subsection{Elicitation with adversarial attack}

We test the robustness of our mechanism when facing a 0.3-fraction of adversary in the participating population. We introduce an adversarial agent, LinfPGDAttack, introduced in AdverTorch~\cite{ding2019advertorch} to influence the labels for verification when there is no ground truth. In Figure~\ref{Fig:fig3}, both the 0-1 score and CE score induce truthful reporting of a hypothesis for MNIST. 

However, for CIFAR-10, with the increasing of misreport rate, the decreasing tendency fluctuates more often. Two factors attribute to this phenomenon: the agents' abilities as well as the quality of generated "ground truth" labels. When the misreport rate is large and generated labels are of low quality, the probability of successfully matching the misreported label to an incorrect generated label can be much higher than usual. But in general, these two scoring structures incentivize agents to truthfully report their results.

% for MNIST test dataset with respect to the uniform off-diagonal model, both $A_{W}$ and $A_{S}$ have a clear decreasing tendency no matter we choose 0-1 score or CE score. But for the sparse model, the deviation is a little bit large, see the yellow line and red line. For CIFAR-10 test dataset, as is shown in Figure~\ref{Fig: 3c},~\ref{Fig: 3d}, the decreasing tendency is less "clean" and seems to fluctuate more often. We 
% can attribute this phenomenon to 2 factors: one is that both agents are relative far from experts, the other one is that the generated "ground-truth" labels have pretty many wrong labels. As a result, when the misreport rate is super large and generated labels are of low quality, the probability of successfully matching the misreported label to an incorrect generated label can be much higher than usual. 

% When there is no ground-truth for verification, with the increasing number of agents, some of them may provide malicious predictions to disturb the stability of the environment. In order to simulate this case, w
% We calculate the weighted sum of the other agent's predicted probability (weights 0.7)  and the adversarial agent's predicted probability (weights 0.3) before the output prediction stage. The class with maximum probability is considered as the label for verification. 

% We repeat previous experiments but with no ground-truth for verification.

%\yl{for the adversarial set of experiments, I feel you }

\begin{figure}[ht]
%\makebox[7pt]{\raisebox{39pt}{\rotatebox[origin=c]{90}{\small{XXX}}}}%
    \centering
    \subfigure[\scriptsize 0-1 score, MNIST]
{\includegraphics[width=.26\textwidth]{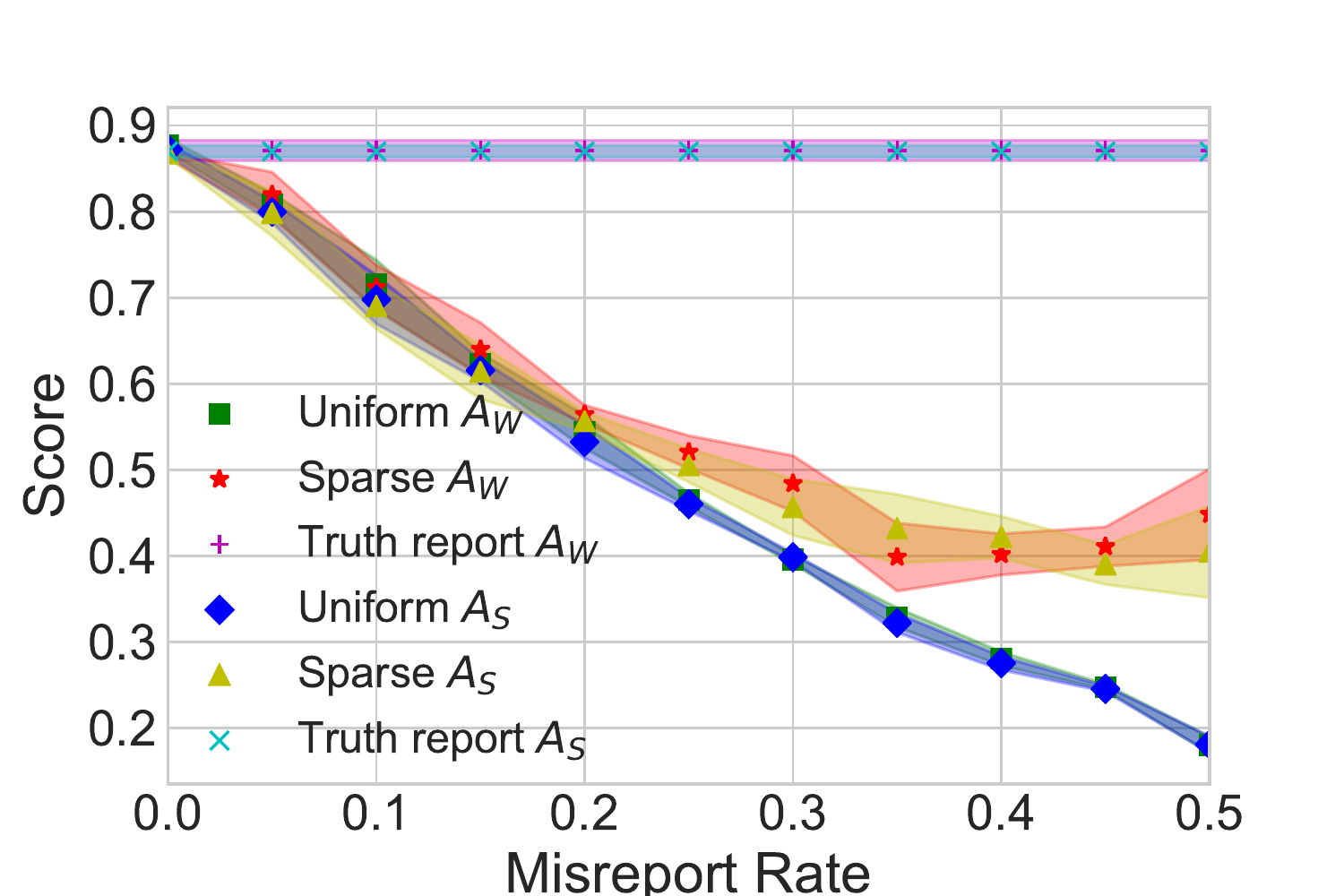}
\label{Fig: 3a}}
    \hspace{-0.28in}
    \subfigure[\scriptsize CE score, MNIST]
    {\includegraphics[width=.26\textwidth]{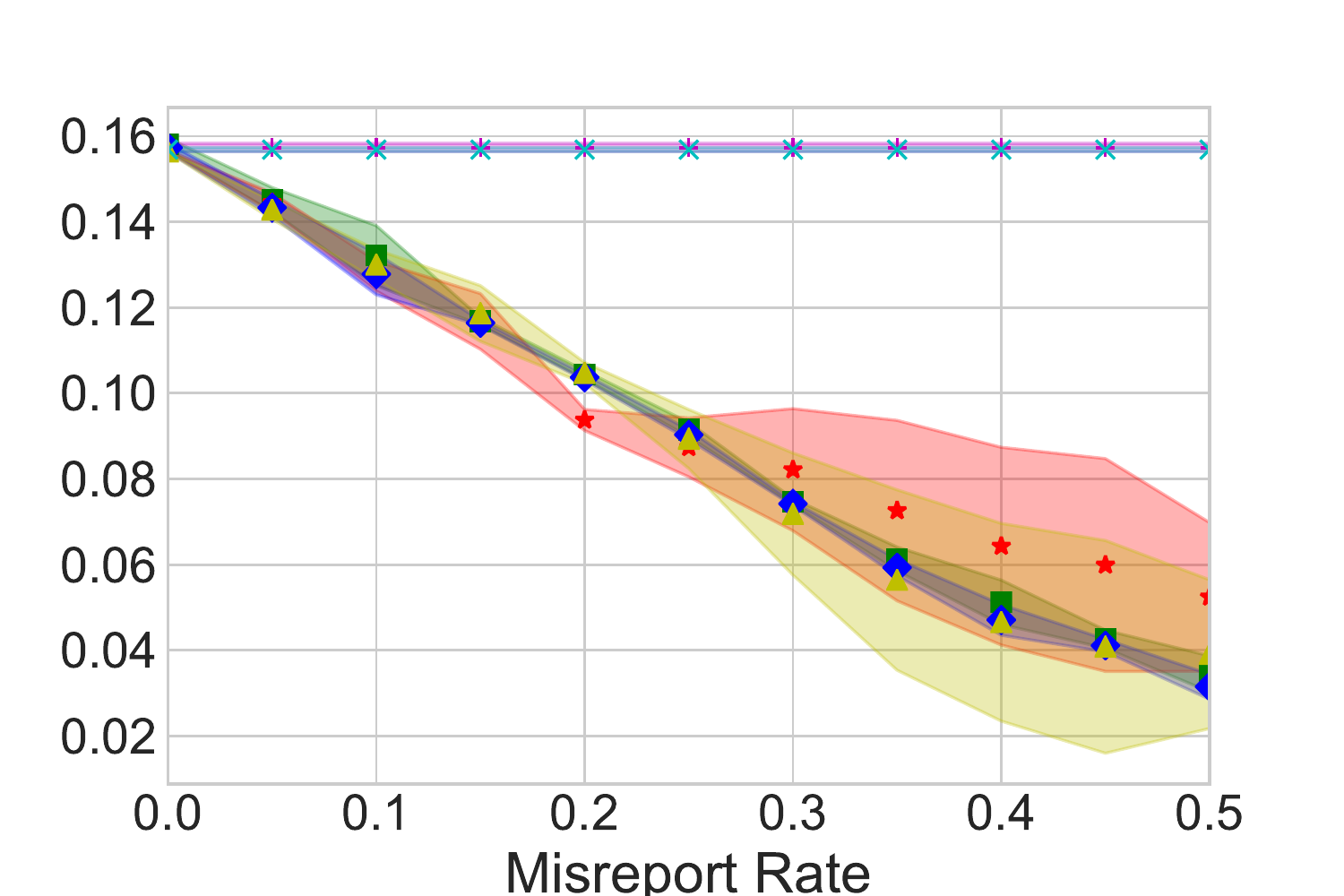}
    \label{Fig: 3b}}
    \hspace{-0.28in}
    \subfigure[\scriptsize 0-1 score, CIFAR]
    {\includegraphics[width=.26\textwidth]{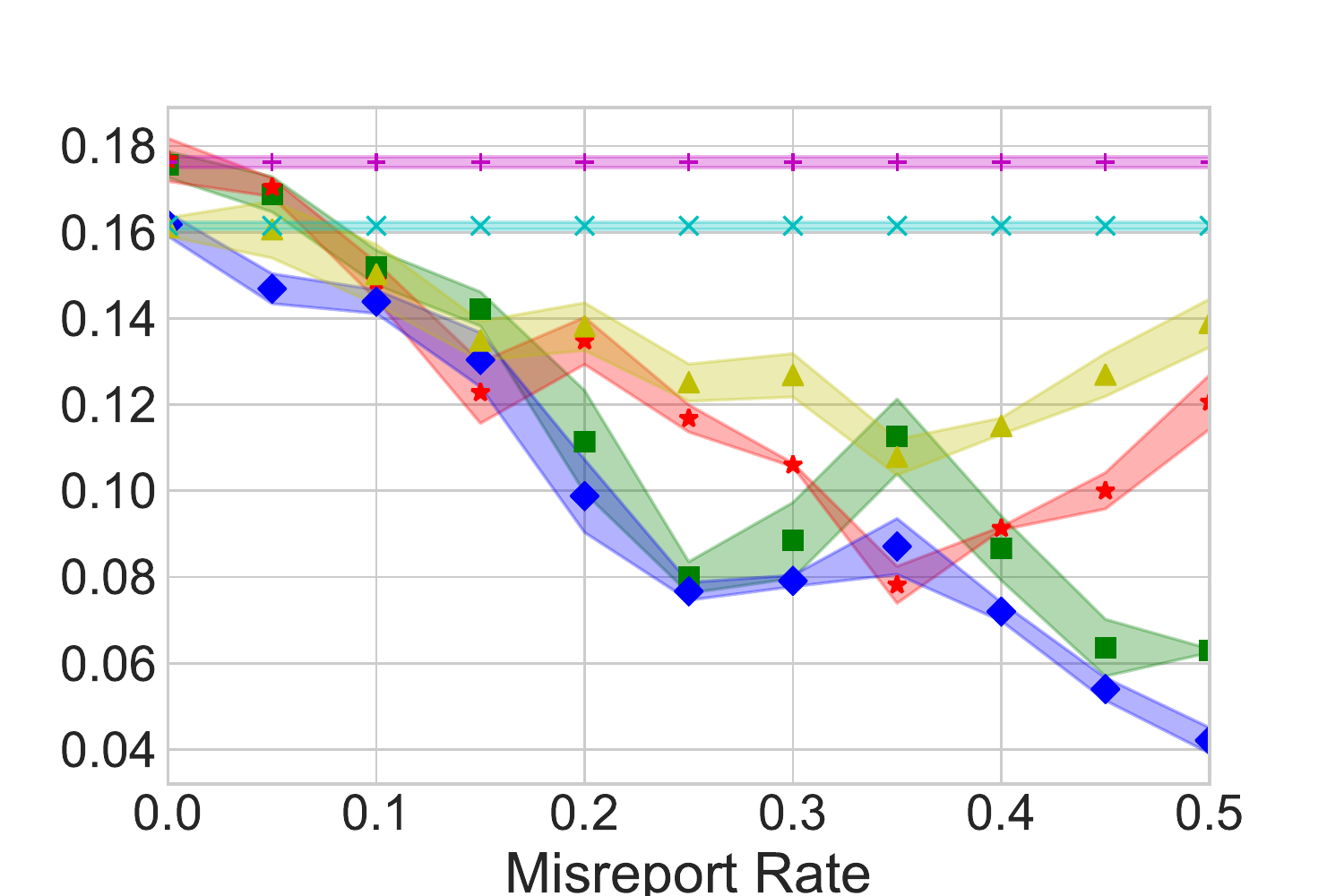}
    \label{Fig: 3c}}
    \hspace{-0.28in}
    \subfigure[\scriptsize CE score, CIFAR]
    {\includegraphics[width=.26\textwidth]{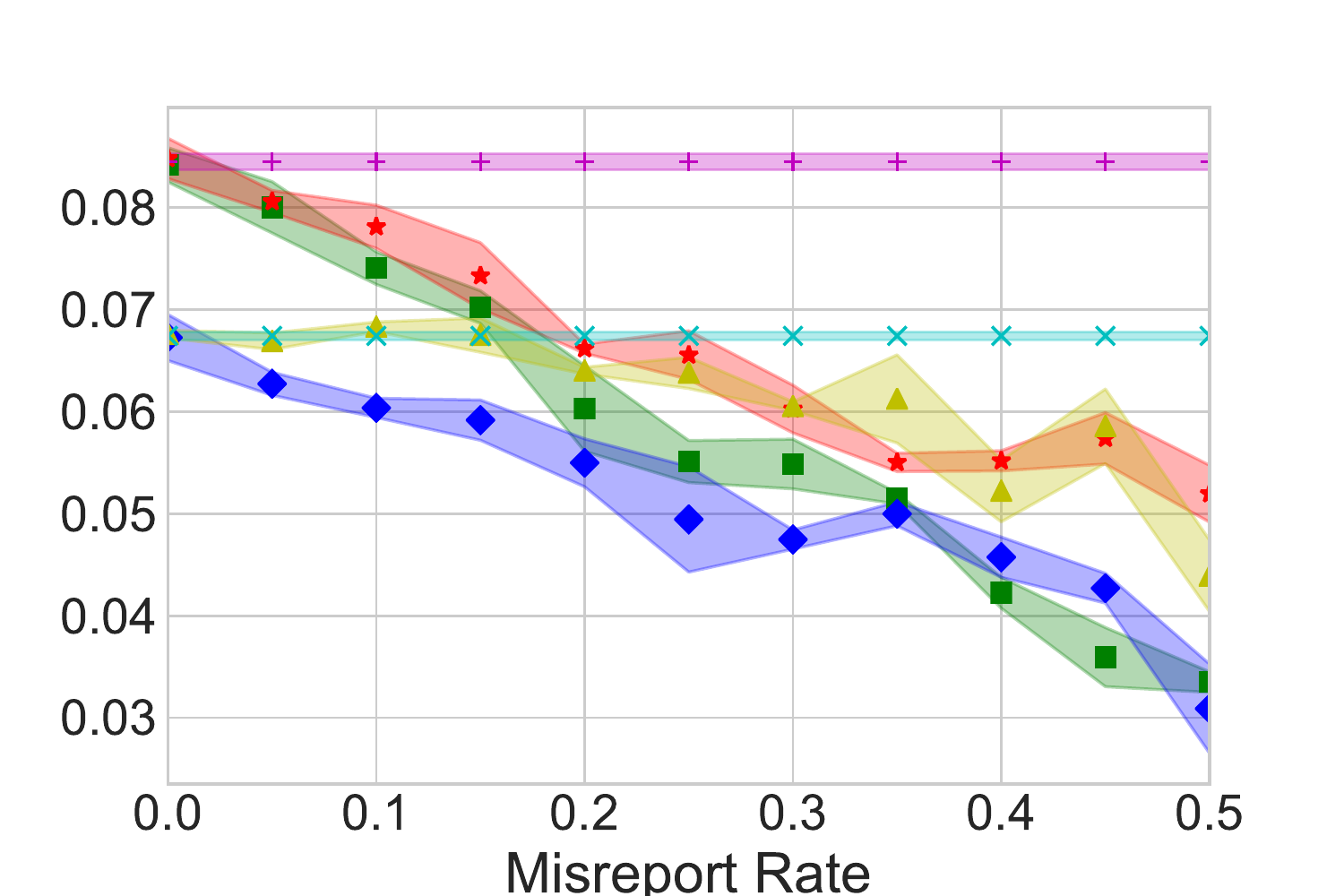}
    \label{Fig: 3d}}

        \vspace{-5pt}
        \caption{Hypothesis scores versus misreport rate (with adversarial attack). 
        % \vspace{-18pt}
    }
    \label{Fig:fig3}
\end{figure}

\section{Concluding remarks}
This paper provides an elicitation framework to incentivize contribution of truthful hypotheses in federated learning. We have offered a scoring rule based solution template which we name as hypothesis elicitation. We establish the incentive property of the proposed scoring mechanisms and have tested their performance with real-world datasets extensively. We have also looked into the accuracy, robustness of the scoring rules, as well as market approaches for implementing them.
\clearpage
\newpage
\bibliographystyle{plain}
\bibliography{library,myref,noise_learning, jiaheng}

\newpage
\section*{Appendix}
\subsection*{A. Proofs}
% \section*{Appendix}
\subsection*{Proof for Theorem \ref{thm:calibrate1}}

\begin{definition}\label{def:cal}  \cite{bartlett2006convexity}
A classification-calibrated loss function $\ell$ is defined as the follows: there exists a non-decreasing convex function $\Psi(\cdot)$ that satisfies:
$
\Psi\left(R(f) - R^*\right) \leq R_{\ell}(f) - R^*_{\ell}.
$ $\Phi(0) = 0$.
\end{definition}

\begin{proof}

Denote the Bayes risk of a classifier $f$ as 
$
R(f) := \mathbb{E}_{(X,Y)} \left[\mathbbm{1} \Big(f(X) \neq
    Y\Big) \right],
$ its minimum risk as $R^* = \argmin_{f} R(f)$. The classifier's $\ell$-risk is defined as 
$
R_{\ell}(f) := \mathbb{E}_{(X,Y)} \left[\ell(f(X),Y) \right]$, with its minimum value $R^*_{\ell} := \argmin_{f} R_{\ell}(f)$.

We prove by contradiction. Suppose that reporting a hypothesis $f$ returns higher payoff (or a smaller risk) than reporting $\f_i$ under $\ell$. Then 
\begin{align*}
    0 > R_{\ell}(f) - R_{\ell}(\f_i) \underbrace{=}_{(1)}  R_{\ell}(f)-R^*_{\ell}    \underbrace{\geq}_{(2)} \Psi\left(R(f) - R^*\right)\underbrace{\geq}_{(3)} \Psi(0) = 0,
\end{align*}
which is a contradiction. In above, the first equality (1) is by definition, (2) is due to the calibration condition, (3) is due to the definition of $R^*$.

\end{proof}

\subsection*{Proof for Lemma \ref{lemma:delta}}
% \wjh{In Lemma 1, we mention $f_{i}^*$ instead of $f$, in this proof, shall we change $f$ into $f_{i}^*$ to make the symbol keep consistency?}\yl{yes please revise}
\begin{proof}
The proof builds essentially on the law of total probability (note $f^*$ and $Y$ are the same):
\begin{align*}
    &\Delta^*(1,1) = \PP(f_{i}^*(X)=1,Y=1) - \PP(f_{i}^*(X)=1)\cdot \PP(Y=1)\\
    =&\PP(f_{i}^*(X)=1|Y=1) \cdot \PP(Y=1)- \PP(f_{i}^*(X)=1)\cdot \PP(Y=1)\\
    =&\PP(Y=1)\left(\PP(f_{i}^*(X)=1|Y=1) - \PP(f_{i}^*(X)=1)\right)\\
    =&\PP(Y=1)\biggl(\PP(f_{i}^*(X)=1|Y=1) - \PP(Y=1)\PP(f_{i}^*(X)=1|Y=1) \\
    &~~~~-\PP(Y=2)\PP(f_{i}^*(X)=1|Y=2)\biggr)\\
    =&\PP(Y=1)\bigl(\PP(Y=2) \cdot \PP(f_{i}^*(X)=1|Y=1)  - \PP(Y=2)\PP(f_{i}^*(X)=1|Y=2)\bigr)\\
    =&\PP(Y=1)\cdot \PP(Y=2) \cdot (\PP(f_{i}^*(X)=1|Y=1)-\PP(f_{i}^*(X)=1|Y=2))\\
    =&\PP(Y=1)\cdot \PP(Y=2) \cdot \left(1-\FNR(f_{i}^*) - \FPR(f_{i}^*)\right) > 0 
\end{align*}
Now consider $\Delta^*(1,2)$:
\begin{align*}
    &\Delta^*(1,2) = \PP(f_{i}^*(X)=1,Y=2) - \PP(f_{i}^*(X)=1)\cdot \PP(Y=2)\\
    =&\PP(f_{i}^*(X)=1|Y=2) \cdot \PP(Y=2)- \PP(f_{i}^*(X)=1)\cdot \PP(Y=2)\\
    =&\PP(Y=2)\bigl(\PP(f_{i}^*(X)=1|Y=2) - \PP(f_{i}^*(X)=1)\bigr)\\
    =&\PP(Y=2)\biggl(\PP(f_{i}^*(X)=1|Y=2) - \PP(Y=1)\PP(f_{i}^*(X)=1|Y=1) \\
    &~~~~- \PP(Y=2)\PP(f_{i}^*(X)=1|Y=2)\biggr)\\
    =&\PP(Y=2)\bigl(\PP(Y=1) \cdot \PP(f_{i}^*(X)=1|Y=2)  - \PP(Y=1)\PP(f_{i}^*(X)=1|Y=1)\bigr)\\
    =&\PP(Y=2)\cdot \PP(Y=1) \cdot (\PP(f_{i}^*(X)=1|Y=2)-\PP(f_{i}^*(X)=1|Y=1))\\
    =&\PP(Y=2)\cdot \PP(Y=2) \cdot \left(\FNR(f_{i}^*) + \FPR(f_{i}^*)-1\right) < 0 
\end{align*}
The second row of $\Delta^*$ involving $\Delta^*(2,1),\Delta^*(2,2)$ can be symmetrically argued. 
\end{proof}

%\begin{definition}\label{def:BI}
%$f$ is Bayesian informative about ground truth $Y$ if the following condition holds:
%\[
%\PP(Y=L|f(X)=L) > \PP(Y=L),~\forall L.
%\]
%\end{definition}

%\begin{proposition}For binary classification ($L=2$), 
%$\FNR(f) + \FPR(f) < 1$ is a sufficient and necessary condition that $f$ is Bayesian informative about $Y$. %, when $L=2$.
%\end{proposition}

\subsection*{Proof for Theorem \ref{thm:CA:truthful}}

\begin{proof}
Note the following fact:
$$
\E[S(f,\f)] := \sum_n \E[S(f(x_n),f^*(x_n))].
$$
Therefore we can focus on the expected score of individual sample $X$. The incentive compatibility will then hold for the sum. 

The below proof is a rework of the one presented in \cite{shnayder2016informed}:
% \wjh{In the 5th and 6th line of the equation, shall we replace $X$ with $X_{p1}$ and $X_{p2}$}
\begin{align*}
&\quad ~ \E\left[S(\rf_i(X),f^*(X))\right] \\
&= \E \left [ Sgn(\Delta^*(\rf_i(X),f^*(X)))-Sgn(\Delta^*(\rf_i(X_{p_1}),f^*(X_{p_2})))\right ]\\
&=\sum_{k \in [L]} \sum_{l \in [L]}\PP(\f_i(X) = k, f^*(X) = l) \\
&\quad\quad  \cdot \sum_{r \in [L]}\PP(\rf_i(X)=r|\f_i(X)=k) \cdot Sgn(\Delta^*(r,l))\\
&~~~~~-\sum_{k \in [L]} \sum_{l \in [L]}\PP(\f_i(X_{p_1}) = k)\cdot \PP( f^*(X_{p_2}) = l)\\
&\quad\quad \cdot \sum_{r \in [L]}\PP(\rf_i(X)=r |\f_i(X_{p_1})=k) \cdot Sgn(\Delta^*(r,l))\\
&=\sum_{k \in [L]} \sum_{l \in [L]}\PP(\rf_i(X) = k, f^*(X) = l) \\
&\quad\quad  \cdot \sum_{r \in [L]}\PP(\rf_i(X)=r|\f_i(X)=k) \cdot Sgn(\Delta^*(r,l))\\
&~~~~~-\sum_{k \in [L]} \sum_{l \in [L]}\PP(\f_i(X) = k)\cdot \PP( f^*(X) = l) \tag{replacing $X_{p}$ with $X$ due to iid assumption}\\
&\quad\quad \cdot \sum_{r \in [L]}\PP(\rf_i(X)=r |\f_i(X)=k) \cdot Sgn(\Delta^*(r,l))\\
&=\sum_{k \in [L]} \sum_{l \in [L]} \Delta^*(k,l) \cdot \sum_{r \in [L]} \PP(\rf_i(X)=r|\f_i(X)=k)\cdot Sgn(\Delta^*(r,l))
\end{align*}
Note that truthful reporting returns the following expected payment:
\begin{align*}
    &\sum_{k \in [L]} \sum_{l \in [L]} \Delta^*(k,l) \cdot \sum_{r \in [L]}\PP(\rf_i(X)=\f_i(X)=r|\f_i(X)=k) \cdot Sgn(\Delta^*(r,l))\\
    =&     \sum_{k \in [L]} \sum_{l \in [L]} \Delta^*(k,l) \cdot Sgn(\Delta^*(k,l)) \tag{Only the corresponding $k$ survives the 3rd summation}\\
    =& \sum_{k,l: \Delta^*(k,l) > 0} \Delta^*(k,l)
    %\\
    %\geq &\sum_{i \in [L]} \sum_{j \in [L]} \Delta(i,j) \cdot \sum_{r_1 \in [L]}\PP(r(X)=r_1|f(X)=i) M(r_1,j)
\end{align*}
Because $\sum_{r \in [L]}\PP(\rf_i(X)=r|\f_i(X)=k) \cdot Sgn(\Delta^*(r,l)) \in [0,1]$ we conclude that for any other reporting strategy:
\[
\sum_{k,l: \Delta^*(k,l) > 0} \Delta^*(k,l)\geq \sum_{k \in [L]} \sum_{l \in [L]} \Delta^*(k,l) \cdot \sum_{r \in [L]}\PP(\rf_i(X)=r|\f_i(X)=k) \cdot Sgn(\Delta^*(r,l))
\]
completing the proof.
\end{proof}

\subsection*{Proof for Theorem \ref{thm:accuracy}}

\begin{proof}
For any classifier $f$, the expected score is
\begin{align*}
    &\quad ~ \E[S(f(X),Y)] \\
    &= \PP(f(X) = Y) - \PP(f(X)=1)\PP(Y=1) - \PP(f(X)=2)\PP(Y=2) \quad~\tag{independence between $x_{p1}$ and $x_{p2}$}\\
    &= \PP(f(X) = Y) - \PP(f(X)=1)\cdot 0.5 - \PP(f(X)=2)\cdot 0.5 \quad~\tag{\text{equal prior}}\\
    &= \PP(f(X) = Y) - 0.5.
\end{align*}
The last equality indicates that higher than accuracy, the higher the expected score given to the agent, completing the proof.
\end{proof}

\subsection*{Calibrated CA Scores}

We start with extending the definition for calibration for CA:
\begin{definition}
We call $S_{\ell}$ w.r.t. original CA scoring function $S$ if the following condition holds:
\begin{align}
    &\Psi\left(\E[S(f(X),f^*(X))] - \E[S(\f_i(X),f^*(X))]\right)\nonumber \\
    &\leq \E[S_{\ell}(f(X),f^*(X))] - \E[S_{\ell}(\f_i(X),f^*(X))], \forall f. \label{calibration:S}
\end{align}
\end{definition}
 Since $S$ induces $\f_i$ as the maximizer, if $S_{\ell}$ satisfies the calibration property as defined in Definition \ref{def:cal}, we will establish similarly the incentive property of $S(\cdot)$
\begin{theorem}
If for a certain loss function $\ell$ such that $S_{\ell}$ satisfies the calibration condition, then $S_{\ell}$ induces truthful reporting of $\f_i$.
\end{theorem}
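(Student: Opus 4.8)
The plan is to reuse the contradiction template from the proof of Theorem~\ref{thm:calibrate1}, feeding in two ingredients: (i) by Theorem~\ref{thm:CA:truthful}, the unmodified CA score $S$ is already maximized in expectation by truthful reporting, i.e. $\E[S(f(X),f^*(X))]\le \E[S(\f_i(X),f^*(X))]$ for every report $f$; and (ii) the defining calibration inequality~\eqref{calibration:S}, in which $\Psi$ is non-decreasing and convex with $\Psi(0)=0$. The goal is to upgrade the elicitation guarantee from the sign-based score $S$ to the loss-calibrated score $S_\ell$.

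First I would reduce to a single task. Exactly as in the proof of Theorem~\ref{thm:CA:truthful}, the total payment decomposes additively, $S_\ell(f,\f)=\sum_{n=1}^N S_\ell(f(x_n),f^*(x_n))$, so by linearity of expectation it suffices to establish $\E[S_\ell(\f_i(X),f^*(X))]\ge \E[S_\ell(f(X),f^*(X))]$ for the per-task expected scores; summation over $n$ then yields truthfulness of the aggregate. I would then argue by contradiction: suppose some report $f$ satisfies $\E[S_\ell(f(X),f^*(X))]>\E[S_\ell(\f_i(X),f^*(X))]$. Substituting this $f$ into~\eqref{calibration:S} and invoking the monotonicity of $\Psi$, the normalization $\Psi(0)=0$, and the sign of the $S$-excess furnished by ingredient (i), I expect to reproduce a chain mirroring the one in Theorem~\ref{thm:calibrate1},
\[
0 > R_\ell(f)-R_\ell^* \ge \Psi\big(R(f)-R^*\big) \ge \Psi(0)=0,
\]
but now with $\E[S_\ell(\cdot)]$ in the role of $R_\ell$ and $\E[S(\cdot)]$ in the role of $R$, producing a contradiction. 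Hence $\f_i$ maximizes $\E[S_\ell(\cdot)]$ and $S_\ell$ induces truthful reporting.

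The step I expect to be the main obstacle is orientation and sign bookkeeping rather than any deep estimate. In Theorem~\ref{thm:calibrate1} the argument of $\Psi$ is forced to be nonnegative because $R(f)-R^*\ge 0$ by optimality of the Bayes risk, which is precisely what lets $\Psi(\cdot)\ge\Psi(0)=0$ close the loop. Here the analogous nonnegativity must be imported from Theorem~\ref{thm:CA:truthful}: one has to feed the $S$-gap into $\Psi$ with the orientation under which $\f_i$ being the $S$-maximizer places that argument on the correct side of $0$. If instead the gap enters with the opposite sign, inequality~\eqref{calibration:S} merely lower-bounds the $S_\ell$-difference and the contradiction collapses; so pinning down the correct reading of~\eqref{calibration:S} at the optimum is the crux, after which the additivity reduction and the monotonicity of $\Psi$ are routine.
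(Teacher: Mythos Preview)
Your proposal is correct and follows exactly the paper's approach: the paper's proof of this theorem is the single sentence ``The proof of the above theorem repeats the one for Theorem~\ref{thm:calibrate1}, therefore we omit the details.'' Your caution about orientation is well placed---once the calibration inequality~\eqref{calibration:S} is read with the gap oriented so that CA-truthfulness (Theorem~\ref{thm:CA:truthful}) puts the argument of $\Psi$ on the nonnegative side, the contradiction chain closes exactly as you outline.
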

The proof of the above theorem repeats  the one for Theorem \ref{thm:calibrate1}, therefore we omit the details. 

Denote by $f^*_{\ell} = \argmin_f R_{\ell}(f)$. Sufficient condition for CA calibration was studied in \cite{liu2020peerloss}: 
\begin{theorem}[Theorem 6 of \cite{liu2020peerloss}]
Under the following conditions, $S_{\ell}$ is calibrated if $\PP(Y=1) = 0.5$, and $f^*_{\ell}$ satisfies the following:
$
\E[\ell(f^*_{\ell}(X),-Y)] \geq \E[\ell(f(X),-Y)],~\forall f.
$ %(2) $\alpha < 1, \max\{e_{+1},e_{-1}\} < 0.5$, and $\ell''(t,y) = \ell''(t,-y)$.
\end{theorem}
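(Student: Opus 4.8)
The plan is to verify the calibration condition (\ref{calibration:S}) in three stages: first convert $\E[S_\ell(f(X),f^*(X))]$ into a transparent self-plus-peer decomposition, then use the two hypotheses to pin down its maximizer, and finally construct the transform $\Psi$ that bridges the $\ell$-based score to the $0$-$1$-based CA score $S$.

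First I would expand $\E[S_\ell(f(X),Y)]$ exactly as in the proof of Theorem~\ref{thm:CA:truthful}. Since $f^*=Y$ and the two auxiliary tasks $x_{p_1},x_{p_2}$ are independent draws under the iid assumption, the peer term factorizes and the expected calibrated score separates:
\begin{align*}
\E[S_\ell(f(X),Y)] = -\E_{(X,Y)}[\ell(f(X),Y)] + \E_X\Big[\textstyle\sum_{y}\PP(Y=y)\,\ell(f(X),y)\Big] = -R_\ell(f) + \E_X\Big[\textstyle\sum_{y}\PP(Y=y)\,\ell(f(X),y)\Big].
\end{align*}
Under the uniform prior $\PP(Y=1)=\PP(Y=2)=\tfrac12$ the second summand is the label-marginal-averaged loss, which I would identify with the risk-against-flipped-labels term $\E[\ell(f(X),-Y)]$ of the hypothesis (reading $-Y$ as the independently drawn peer label). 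I would then read off the maximizer: by definition $f^*_{\ell}=\argmin_f R_\ell(f)$ maximizes the self term $-R_\ell(f)$, and the hypothesis $\E[\ell(f^*_{\ell}(X),-Y)]\geq \E[\ell(f(X),-Y)]$ says exactly that $f^*_{\ell}$ maximizes the peer term; hence $f^*_{\ell}$ maximizes both summands at once and is therefore the report $\f_i$ that $S_\ell$ should elicit. Running the same decomposition on $S$ (or invoking Theorem~\ref{thm:accuracy}) gives $\E[S(f(X),Y)]=\PP(f(X)=Y)-\tfrac12$, whose maximizer is the accuracy-optimal classifier, and the uniform-prior/condition pair is what forces these two maximizers to coincide, which is the structural content a surrogate bound needs.

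It then remains to produce the non-decreasing convex $\Psi$ with $\Psi(0)=0$ of Definition~\ref{def:cal} dominating the $0$-$1$ CA regret by the $\ell$ CA regret. Here I would port the $\psi$-transform of Bartlett et al.: fix $X$, write the conditional calibrated score using the decomposition above, form the pointwise gap between its conditional optimum and the best conditional score achievable by a wrong-class prediction, take the convex lower envelope of that gap to define $\Psi$, and lift the pointwise inequality to the expected one via Jensen; calibration together with the contradiction argument of Theorem~\ref{thm:calibrate1} then yields truthful reporting of $f^*_{\ell}$. The main obstacle is precisely this last step: checking that the gap induced by the \emph{subtracted} peer term still gives a genuinely convex, non-decreasing $\Psi$ vanishing at $0$, and that the conditional optima of the self and peer terms align so the gap is non-degenerate away from the decision boundary. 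This is exactly where the uniform prior and the maximizer condition on $f^*_{\ell}$ become indispensable, whereas the algebraic decomposition in the earlier steps is routine by comparison.
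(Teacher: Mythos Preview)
The paper does not supply its own proof of this statement; it is quoted as Theorem~6 of \cite{liu2020peerloss} and used as a black box. So there is no in-paper argument to compare against, and I can only assess your sketch on its own terms.

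Your first two stages are broadly on the right track but contain a concrete algebraic slip. The peer term $\E_X\bigl[\sum_y \PP(Y=y)\,\ell(f(X),y)\bigr]$ is \emph{not} equal to $\E[\ell(f(X),-Y)]$ under the uniform prior: in $\E[\ell(f(X),-Y)]$ the pair $(X,Y)$ is drawn jointly and then the label is flipped, whereas in the peer term $X$ and the label are independent. A short computation under $\PP(Y=1)=\tfrac12$ gives
\[
\E_X\Bigl[\tfrac12\ell(f(X),1)+\tfrac12\ell(f(X),2)\Bigr] \;=\; \tfrac12\bigl(R_\ell(f)+\E[\ell(f(X),-Y)]\bigr),
\]
so that in fact $\E[S_\ell(f(X),Y)] = \tfrac12\bigl(\E[\ell(f(X),-Y)] - R_\ell(f)\bigr)$. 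Your parenthetical ``reading $-Y$ as the independently drawn peer label'' is therefore not a harmless relabeling; the theorem's hypothesis genuinely concerns the flipped-label risk on the joint distribution, and one needs the factor-of-two identity above to connect it to the peer term. Fortunately your qualitative conclusion survives the correction: $f^*_\ell$ minimizes $R_\ell$ by definition and maximizes $\E[\ell(\cdot,-Y)]$ by hypothesis, hence still maximizes $\E[S_\ell]$.

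On the third stage you are candid that the Bartlett-style construction of $\Psi$ is where the real work lies and you leave it as a sketch. That honesty is appropriate, but it means the proposal does not yet contain a proof: the pointwise gap you describe, its convex envelope, and the Jensen lift all have to be carried out with the subtracted peer term in place, and the mere fact that $f^*_\ell$ maximizes $\E[S_\ell]$ does not by itself produce a nondecreasing convex $\Psi$ with $\Psi(0)=0$ satisfying~(\ref{calibration:S}). Since the present paper defers this entirely to \cite{liu2020peerloss}, a complete argument would need to reproduce (or at least cite in detail) the relevant construction from that reference.
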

% \end{theorem}

\subsection*{Proof for Lemma \ref{col:delta}}
% \wjh{$\Delta (k,l)$ seems to be the same as $\Delta ^*(k, l)$ in the section before theorem 2}
\begin{proof}
Denote by
\begin{align*}
    &\Delta(k,l)  = \PP\bigl(\f_i(X)=k,f^*(X)=l\bigr)- \PP\bigl(\f_i(X) = k\bigr) \PP\bigl(f^*(X)= l\bigr), ~k,l \in \{1,2\}.
\end{align*}
i.e., the correlation matrix defined between $\f_i$ and ground truth label $Y$ ($f^*$). And
\begin{align*}
\Delta^*(k,l)  = \PP\bigl(\f_i(X)=k,\f_j(X)=l\bigr)- \PP\bigl(\f_i(X) = k\bigr) \PP\bigl(\f_j(X)= l\bigr)
\end{align*}
%Next we show that
%\[
%\Delta^*(k,l) = \left(1-\FNR(\f_j) - \FPR(\f_j)\right) \cdot \Delta(k,l) 
%\]
$\Delta^*$ further dervies
\begin{align*}
    \Delta^*(k,l)  &= \PP\bigl(\f_i(X)=k,\f_j(X)=l\bigr)- \PP\bigl(\f_i(X) = k\bigr) \PP\bigl(\f_j(X)= l\bigr)\\
    &=\PP\bigl(\f_i(X)=k,\f_j(X)=l|Y=1\bigr)\cdot \PP(Y=1) \\
    &~~~~+ \PP\bigl(\f_i(X)=k,\f_j(X)=l|Y=2\bigr)\cdot \PP(Y=2)\\
    &-\PP\bigl(\f_i(X) = k\bigr) \left(\PP\bigl(\f_j(X)= l|Y=1\bigr)\cdot \PP(Y=1) + \PP\bigl(\f_j(X)= l|Y=2\bigr)\cdot \PP(Y=2) \right)\\
    &=\PP(\f_i(X)=k|Y=1)\cdot \PP(\f_j(X)=l|Y=1)\cdot \PP(Y=1) \\
    &~~~~+ \PP(\f_i(X)=k|Y=1) \cdot \PP(\f_j(X)=l|Y=2)\cdot \PP(Y=2) \tag*{(by conditional independence)}\\
    &-\PP\bigl(\f_i(X) = k\bigr) \left(\PP\bigl(\f_j(X)= l|Y=1\bigr)\cdot \PP(Y=1) + \PP\bigl(\f_j(X)= l|Y=2\bigr)\cdot \PP(Y=2) \right)\\
    &=\PP(\f_j(X)=l|Y=1)\left( \PP(\f_i(X)=k|Y=1) \cdot \PP(Y=1)  - \PP\bigl(\f_i(X) = k\bigr)\cdot \PP(Y=1) \right) \\
    &+\PP(\f_j(X)=l|Y=2)\left( \PP(\f_i(X)=k|Y=2) \cdot \PP(Y=2)  - \PP\bigl(\f_i(X) = k\bigr)\cdot \PP(Y=2) \right)\\
    &=\PP(\f_j(X)=l|Y=1) \cdot \Delta(k,1) + \PP(\f_j(X)=l|Y=2) \cdot \Delta(k,2)\\
    &=\left(\PP(\f_j(X)=l|Y=1)-\PP(\f_j(X)=l|Y=2)\right) \cdot \Delta(k,1) ~~~~~~\tag*{because $(\Delta(k,1)+\Delta(k,2)=0)$}
\end{align*}
If $k=1,l=1$, the above becomes
\[
\left(1-\FNR(\f_j) - \FPR(\f_j)\right) \Delta(1,1) > 0
\]
If $k=1,l=2$, the above becomes
\[
\left(-1+\FNR(\f_j) + \FPR(\f_j)\right) \Delta(1,1) < 0
\]
If $k=2,l=1$, the above becomes
\[
\left(1-\FNR(\f_j) - \FPR(\f_j)\right) \Delta(2,1) < 0
\]
If $k=2,l=2$, the above becomes
\[
\left(-1+\FNR(\f_j) + \FPR(\f_j)\right) \Delta(2,1) > 0
\]
Proved.
\end{proof}

\subsection*{Proof for Theorem \ref{thm:pp:accuracy}}
% \wjh{Shall we delete the first equation after equation (5)? I think we can ignore the first equation and go directly to the 2nd equation}
\begin{proof}
%Let's short-hand $Y' = f'(X)$. The proof builds on a key property we prove 
%\[
%\E [S(f(X),Y')] = (1-\alpha' - \beta') \E[S(f(X),Y)]
%\]
%where 
%%$
%\alpha' := \PP(Y'=2|Y=1),~~\beta' := \PP(Y'=1|Y=2)
%$, as long as $Y'$ and $f(X)$ are conditionally independent given the ground truth label $Y$. The proof of this property is deferred to the appendix. 

Shorthand the following error rates:
\[
\alpha' = \PP(f'(X)=2|Y=1),~\beta' = \PP(f'(X)=1|Y=2) 
\]
When two classifiers $f, f'$ are conditionally independent given $Y$, next we establish the following fact:
\begin{align}
\E [S(f(X),f'(X))] = (1-\alpha' - \beta') \cdot \E[S(f(X),Y)].~\label{eqn:affine}
\end{align}
Short-hand $p:=\PP(Y=1)$. Then 
\begin{align*}
    &\E [S(f(X),f'(X))]\\
    =&\E [\BR(f(X),f'(X))]\\ &-\PP(f(X)=1)\PP(f'(X)=1)-\PP(f(X)=2)\PP(f'(X)=2)  \tag{second term in CA}\\
    =& p \cdot   \E [\BR(f(X),f'(X))|Y=1] + (1-p) \cdot   \E [\BR(f(X),f'(X))|Y=2]\\
    &-\PP(f(X)=1)\PP(f'(X)=1)-\PP(f(X)=2)\PP(f'(X)=2)  \\
    =& p \cdot   \E [\BR(f(X),1)] \cdot (1-\alpha') +p \cdot   \E [\BR(f(X),2)] \cdot \alpha' \tag{by conditional independence}\\
            &+ (1-p) \cdot   \E [\BR(f(X),1)]\cdot \beta' + (1-p) \cdot \E[\BR(f(X),2)]\cdot (1-\beta') \tag{by conditional independence}\\
    &-\PP(f(X)=1)\PP(f'(X)=1)-\PP(f(X)=2)\PP(f'(X)=2)  \\
    =& p \cdot (1-\alpha'-\beta') \cdot \E [\BR(f(X),1)] + (1-p) \cdot (1-\alpha'-\beta') \cdot \E [\BR(f(X),2)] \\
    &+ \alpha' \cdot \E [\BR(f(X),2)]  + \beta'\cdot \E [\BR(f(X),1)] \\
        &-\PP(f(X)=1)\PP(f'(X)=1)-\PP(f(X)=2)\PP(f'(X)=2)  \\
    %=& p \cdot (1-\alpha'-\beta') \cdot \E [\BR(f(X),1)]-p \cdot (1-\alpha'-\beta') \cdot \E[\BR(f(X),1)] \\
    %&+  (1-\alpha'-\beta') \cdot \E [\BR(f(X),2)] \\
    %&+ \alpha' \cdot \E [\BR(f(X),2)]  + \beta' \cdot \E [\BR(f(X),1)] \\
%    &-\PP(f(X)=1)\PP(f'(X)=1)-\PP(f(X)=2)\PP(f'(X)=2)\\
    =&(1-\alpha'-\beta') \cdot \E[\BR(f(X),Y)]\\
    &- \PP(f(X)=1) (\PP(f'(X)=1)-\beta') \\
    &- \PP(f(X)=2) (\PP(f'(X)=2)-\alpha') \tag{$*$}
    \end{align*}
Since
\begin{align*}
    \PP(f'(X)=1) &= \PP(Y=1) \cdot \PP(f'(X)=1|Y=1) + \PP(Y=2) \cdot \PP(f'(X)=1|Y=2) \\
    &= p \cdot (1-\alpha') + (1-p) \cdot \beta' 
\end{align*}
we have 
\[
\PP(f'(X)=1)-\beta' = p \cdot (1-\alpha'-\beta')
\]
Similarly we have
\[
\PP(f'(X)=2)-\alpha'=  (1-p) \cdot (1-\alpha'-\beta')
\]
Then
\begin{align*}
    (*)=&(1-\alpha'-\beta')  \cdot \E[\BR(f(X),Y)] -(1-\alpha'-\beta') \cdot\left(\PP(f(X)=1) \cdot p +\PP(f(X)=2)\cdot  (1-p)\right)\\
    =&(1-\alpha'-\beta') \cdot \left( \E[\BR(f(X),Y)] - (\PP(f(X)=1) \cdot  p +\PP(f(X)=2)\cdot  (1-p))\right)\\
    =&(1-\alpha'-\beta') \cdot \E[S(f(X),Y)]. \tag{definition of CA on $f(X)$ and $Y$}
\end{align*}
By condition (iii), replace $f,f'$ above with $\f_i,\f_j$ we know 
\[
\E[S(\f_i(X),\f_j(X))] = (1-\alpha_j - \beta_j)\E[S(\f_i(X),Y)]
\]
where
\[
\alpha_j = \PP(\f_j(X)=2|Y=1),~\beta_j = \PP(\f_j(X)=1|Y=2) 
\]
When condition (ii) considering an identify $\Delta^*$ we know that $\f_j$  is Bayesian informative, which states exactly that \cite{LC17}
\[
1-\alpha_j - \beta_j > 0
\]
\begin{definition}[Bayesian informativeness \cite{LC17}] \label{def:BI}
A classifier $f$ is Bayesian informative w.r.t. $Y$ if
\[
1 - \PP(f(X)=2|Y=1) - \PP(f(X)=1|Y=2) > 0.
\]
\end{definition}

Further we have proven that $\E[S(f(X),Y)]$ rewards accuracy under Condition (i):
\begin{align*}
    &\quad~\E[S(\f_i(X),Y)] = \PP(\f_i(X) = Y) - \PP(\f_i(X)=1)\PP(Y=1) - \PP(\f_i(X)=2)\PP(Y=2)\\
    &= \PP(\f_i(X) = Y) - \PP(\f_i(X)=1)\cdot 0.5 - \PP(\f_i(X)=2)\cdot 0.5\\
    &= \PP(\f_i(X) = Y) - 0.5,
\end{align*}
completing the proof.
\end{proof}

\subsection*{Proof for Theorem \ref{thm:market:pp1}}

\begin{proof}
The proof uses the fact that $\E [S(f(X),f'(X))] = (1-\alpha' - \beta') \cdot \E[S(f(X),Y)]$ proved in Theorem \ref{thm:pp:accuracy}, Eqn. (\ref{eqn:affine}):
\begin{align*}
    \E[S(\rf_{t}(x),f'(x)) - S(\rf_{t-1}(X),f'(x))
] = (1-\alpha' - \beta') (\E[S(\rf_{t}(x),Y) - S(\rf_{t-1}(X),Y)])
\end{align*}
Since $f'$ is Bayesian informative we know that $1-\alpha' - \beta'>0$ (Definition \ref{def:BI}). Using the incentive-compatibility property of $S$ when the market is closed with ground truth $Y$ (note that $S(\rf_{t-1}(X),Y)$ is independent from reporting $\rf_t$) we complete the proof.
\end{proof}

\subsection*{Peer Prediction Markets}

Our idea of making a robust extension is to pair the market with a separate ``survey" elicitation process that elicits redundant $C$ hypotheses:
\begin{algorithm}[H]
\caption{Market for Hypothesis Elicitation}\label{m:main}
\begin{algorithmic}[1]
\STATE Pay crowdsourcing survey participants using surveys and the CA mechanism;
\STATE Randomly draw a hypothesis from the surveys (from the $C$ collected ones) to close the market according to Eqn. (\ref{eqn:crowd:market}), up to a scaling factor $\lambda > 0$.
\end{algorithmic}
\end{algorithm}

Assuming the survey hypotheses are conditional independent w.r.t. the ground truth $Y$. Denote by $f'$ a randomly drawn hypothesis from the surveys, and
\[
\alpha' := \PP(f'(X) =2|Y=1),~~\beta' := \PP(f'(X)=1|Y=2)
\]
For an agent who participated both in survey and market will receive the following:
% \wjh{What is $f_{i(t)}$? Is it $f_t$ for agent $i$? It seems that in the proof, we just use $f_{t}$ rather than $f_{i(t)}$}
\[
S(\rf_i,f'_{-i}) + \lambda (S(\rf_{t},f') - S(\rf_{t-1},f'))
\]
Below we establish its incentive property:
\begin{theorem}\label{thm:crowd:market}
For any $\delta$ that
$
 \delta := \frac{\lambda}{ (1-\alpha'-\beta')\cdot (C+\lambda (C-1))} \underbrace{\longrightarrow}_{C \rightarrow \infty} 0 
$, agents have incentives to report a hypothesis that is at most $\delta$ less accurate than the truthful one.
\end{theorem}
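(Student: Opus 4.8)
The plan is to analyze the expected payoff of an agent who participates in both the survey and the market, and to show that the \emph{only} channel through which misreporting can raise that payoff --- the event that the agent's own survey hypothesis happens to be the one drawn to close the market --- contributes a term of order $\lambda/C$, whereas honest accuracy is rewarded with a coefficient bounded away from zero. Balancing the two produces the stated accuracy slack $\delta$. To capture the worst joint manipulation I would let the agent submit a single hypothesis $\rf$ both as the survey report $\rf_i$ and as the market report $\rf_t$, so that the agent's expected total payoff is
$$
\E\big[S(\rf,f'_{-i})\big] + \lambda\,\E\big[S(\rf,f') - S(\rf_{t-1},f')\big],
$$
and then condition on which of the $C$ survey hypotheses is drawn as the closing reference $f'$: with probability $1/C$ the draw returns the agent's own report ($f'=\rf$), and with probability $(C-1)/C$ it returns another survey participant's hypothesis, which by the conditional-independence assumption on the survey is independent of $\rf$ given $Y$.

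On the conditionally independent pieces --- the survey term $S(\rf,f'_{-i})$ and the $(C-1)/C$ share of the market term --- I would invoke the affine identity (\ref{eqn:affine}) established in Theorem \ref{thm:pp:accuracy}, namely $\E[S(\rf(X),f'(X))] = (1-\alpha'-\beta')\,\E[S(\rf(X),Y)]$. Adding the survey weight $1$ to the market weight $\lambda(C-1)/C$, these honest pieces reward $\rf$ through the coefficient $(1-\alpha'-\beta')\big(1+\lambda(C-1)/C\big)$ multiplying $\E[S(\rf(X),Y)]$. Since $f'$ is Bayesian informative (Definition \ref{def:BI}) this coefficient is strictly positive, and under the equal-prior condition $\E[S(\rf(X),Y)] = \PP(\rf(X)=Y)-0.5$, so these pieces strictly reward accuracy.

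The remaining, manipulable piece is the $1/C$ self-reference event, contributing $\frac{\lambda}{C}\E[S(\rf,\rf) - S(\rf_{t-1},\rf)]$; here the affine identity fails because $\rf$ is perfectly correlated with itself, so I would bound this term directly using the boundedness of the CA score. With $Sgn(\Delta^*)$ the identity, $S(\rf(x_n),\rf(x_n)) = 1-\BR(\rf(x_{p_1})=\rf(x_{p_2}))$, hence $\E[S(\rf,\rf)] = 1-\sum_k \PP(\rf(X)=k)^2 \in [0,1]$, and the self-reference gain of any deviation over the truthful report is at most $\lambda/C$. Writing the profitability condition for a deviating hypothesis $\rf$ of accuracy $p$ against the truthful $\f$ of accuracy $p^*$ then gives
$$
(1-\alpha'-\beta')\Big(1+\tfrac{\lambda(C-1)}{C}\Big)(p^*-p) \le \frac{\lambda}{C},
$$
so any profitable deviation obeys $p^*-p \le \frac{\lambda}{(1-\alpha'-\beta')\,(C+\lambda(C-1))} = \delta$, which tends to $0$ as $C\to\infty$.

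I expect the main obstacle to be the careful bookkeeping of the self-reference term: isolating the part that genuinely depends on the agent's report (the $\frac{\lambda}{C}\E[S(\rf,\rf)]$ piece) and checking that the baseline contribution involving $\rf_{t-1}$ either cancels or is dominated, so that the \emph{net} manipulable gain is exactly the $\lambda/C$ factor yielding the claimed $\delta$. A secondary point is to justify that restricting to a single consistent report $\rf_i=\rf_t=\rf$ is without loss of generality: submitting distinct hypotheses in the survey and the market only decorrelates them, which can only shrink the self-reference gain while leaving the honest accuracy-reward coefficient unchanged.
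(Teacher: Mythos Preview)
Your proposal is correct and follows essentially the same route as the paper: the same three-way decomposition (survey term, $(C-1)/C$ market share via the affine identity (\ref{eqn:affine}), and the $1/C$ self-reference piece), the same $\lambda/C$ bound on the manipulable term, and the same balancing inequality yielding $\delta$. If anything you are a bit more careful than the paper, which does not explicitly argue the WLOG restriction to $\rf_i=\rf_t$ and treats the $\rf_{t-1}$ baseline in Term~III somewhat loosely (bounding the whole term in $[0,\lambda/C]$ rather than isolating the net gain over the truthful value).
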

\begin{proof}
The agent can choose to participate in either the crowdsourcing survey or the prediction market, or both. For agents who participated only in the survey, there is clearly no incentive to deviate. This is guaranteed by the incentive property of a proper peer prediction mechanism (in our case it is the CA mechanism that we use to guarantee the incentive property.). 

For agents who only participated in the market at time $t$, we have its expected score as follows:
\begin{align*}
&\E[S(\rf_t(X),f'(X))] - \E[S(\rf_{t-1}(X),f'(X))] \\
=& (1-\alpha' - \beta') \cdot \E[S(\rf_t(X),Y)] -(1-\alpha' - \beta') \cdot \E[S(\rf_{t-1}(X),Y)] \\
=& (1-\alpha' - \beta') (\E[S(\rf_t(X),Y)] - \E[S(\rf_{t-1}(X),Y)]),
\end{align*}
where in above we have used  Theorem 6, Eqn. (\ref{eqn:affine}). We then have proved the incentive compatibility for this scenario.

Now consider an agent who participated both on the survey and market. Suppose the index of this particular agent is $i$ in the survey pool and $t$ in the market. 

By truthfully reporting $\f_i,\f_t$ the agent is scored as 
\begin{align}
&\E[S(\f_i(X),\f_{-i}(X))] + \lambda (\E[S(\f_t(X),f'(X))] - \E[S(\f_{t-1}(X),f'(X))])\nonumber \\
=&\underbrace{\E[S(\f_i(X),\f_{-i}(X))]}_{\text{Term I}} + \lambda \cdot \underbrace{\frac{C-1}{C} \left(\E[S(\f_t(X),\f_{-i})]-\E[S(\f_{t-1}(X),\f_{-i})]\right)}_{\text{Term II: $1-1/C$ probability of seeing another survey }}\nonumber \\
&+\lambda \cdot \underbrace{\frac{1}{C}\cdot (\E[S(\f_t(X),\f_t(X))]-\E[S(\f_{t-1}(X),\f_t(X))])}_{\text{Term III: $1/C$ probability of seeing their own survey hypothesis for closing the market. }}\label{eqn:deviation}
\end{align}
We analyze each of the above terms:
\squishlist
    \item[\textbf{Term I}] Due to the incentive-compatibility of CA, the first term $\E[S(\f_i(X),\f_{-i}(X))]$ is maximized by truthful reporting. And the decrease in score with deviation is proportional to \footnote{Here we make a simplification that when the population of survey is large enough, $f_{-i}$ (average classifier by removing one agent) is roughly having the same error rates as $f'$.} 
    \[
 \E[S(f(X),\f_{-i})]=\lambda \cdot  (1-\alpha'-\beta')\cdot \E[S(f(X),Y)]
\]
\item[\textbf{Term II}] The second term $\frac{C-1}{C}(\cdot)$ is also maximized by truthful reporting, because it is proportional to $\E[S(\f_t(X),Y)]$ (application of Eqn. (\ref{eqn:affine}))
\[
\lambda \cdot \frac{C-1}{C}\cdot \E[S(\f_t(X),\f_{-i})]=\lambda \cdot \frac{C-1}{C} \cdot (1-\alpha'-\beta')\cdot \E[S(\f_t(X),Y)]
\]
\item[\textbf{Term III}] The third $1/C(\cdot)$ term is profitable via deviation. Suppose agent deviates to reporting some $\rf$ that differs from $\f_i$ by $\delta$ amount in accuracy:
\[
\delta := \E[\BR(\rf(X),Y) - \BR(\f_i(X),Y)]
\]
that is $\rf$ is $\delta$ less accurate than $\f_i$. Under CA, since $\E[S(f(X),Y)] =  \PP(f(X) = Y) - 0.5$ (from the proof for Theorem \ref{thm:accuracy}),
\begin{align*}
    &\E[S(\rf(X),\rf(X))]-\E[S(\f_{t-1}(X),\rf(X))]\\
    =&\E[\BR(\rf(X),\rf(X)) - \BR(\f_{t-1}(X),\rf(X))] \\
    =&1-\E[\BR(\f_{t-1}(X),\rf(X))] 
\end{align*}
%\E[S(\rf(X),\rf(X))]-\E[S(\f_{t-1}(X),\rf(X))]=\E[\BR(\rf(X),\rf(X)) - \BR(\f_i(X),\rf(X))]
%\]
%first it is easy to show that (due to the two indicator terms)
% \\
% \wjh{Since $\E[S(\rf(X),\rf(X))]-\E[S(\f_{t-1}(X),\rf(X))]=\E[\BR(\rf(X),\rf(X)) - \BR(\f_i(X),\rf(X))]$, the bound for the following inequality can be tighter? say, $[0,\dfrac{\lambda}{C}]$} \yl{you are right}
We have
\[
0 \leq \lambda \cdot \frac{1}{C}\cdot \left(\E[S(\rf(X),\rf(X))]-\E[S(\f_{t-1}(X),\rf(X))]\right) \leq \frac{\lambda}{C}
\]
Therefore the possible gain from the third term is bounded by $\frac{\lambda}{C}$.  %independent of agent $i$'s reporting
%\item the fourth term is roughly the matching probability; the potential gain by deviation is
%\[
%\lambda \cdot \frac{1}{K} \cdot 2 \Delta^*^2
%\]
%\item the last term's gain by deviation is 
%\[
%\lambda \cdot \frac{1}{K} \cdot (2\Delta^*^2+\Delta^*).
%\]
\squishend

On the other hand, again since $\E[S(f(X),Y)] =  \PP(f(X) = Y) - 0.5$ (from the proof for Theorem \ref{thm:accuracy}), the loss via deviation (the first two terms in Eqn. (\ref{eqn:deviation})) is lower bounded by
\begin{align*}
&\left(1+\lambda \cdot \frac{C-1}{C} \right)\cdot (1-\alpha'-\beta') \cdot \E[S(\rf(X),Y) - S(\f_i(X),Y)]\\
=&\left(1+\lambda \cdot \frac{C-1}{C} \right)\cdot (1-\alpha'-\beta') \cdot \E[\BR(\rf(X),Y) - \BR(\f_i(X),Y)]\\
=&\left(1+\lambda \cdot \frac{C-1}{C} \right)\cdot (1-\alpha'-\beta') \cdot \delta,
\end{align*}
where $1$ comes from the survey reward. Therefore when $C$ is sufficiently large such that
\[
\left(1+\lambda \cdot \frac{C-1}{C} \right)\cdot (1-\alpha'-\beta')\cdot \delta \geq  \frac{\lambda}{C}
 \]
i.e.,
 \[
 \delta \geq \frac{\lambda}{ (1-\alpha'-\beta')\cdot (C+\lambda (C-1))}
 \]
 Agent has no incentive to report such a $\rf$.
 
\end{proof}

\subsection*{Proof for Theorem \ref{thm:robust}}

\begin{proof}
Denote by $\tilde{\rf}(X)$ the reference classifier randomly drawn from the population. Then the expected score for reporting a hypothesis $f$ is given by:
\begin{align*}
    \E[S(f(X), \tilde{\rf}(X))] &= (1-\gamma) \cdot  \E[S(f(X), \f_{1-\gamma}(X))] +\gamma \cdot  \E[S(f(X), \f_{\gamma}(X))]\\
    &=(1-\gamma) \cdot (1-\alpha-\beta)\cdot \E[S(f(X), Y)] \\
    &~~~~+ \gamma \cdot (1-\hat{\alpha}-\hat{\beta}) \cdot  \E[S(f(X), Y)] %\cdot  \E[S(f(X), \f_{\gamma}(X))],
\end{align*}
where $\hat{\alpha},\hat{\beta}$ are the error rates of the adversarial classifier $\f_{\gamma}$:
\[
\hat{\alpha} := \PP(\f_{\gamma}(X)=2|Y=1),~\hat{\beta} := \PP(\f_{\gamma}(X)=1|Y=2).~
\] The second equality is due to the application of  Theorem 6, Eqn. (\ref{eqn:affine}) on $\f_{1-\gamma}(X)$ and $\f_{\gamma}(X)$.
%Applying Eqn. (\ref{eqn:affine}) again leads to%know that
%\[
% \E[S(f(X), \f_{\gamma}(X))] = (1-\hat{\alpha}-\hat{\beta}) \cdot  \E[S(f(X), Y)]
%\]

Therefore
\begin{align*}
  \E[S(f(X), \tilde{\rf}(X))] =\left((1-\gamma) \cdot (1-\alpha-\beta) + \gamma \cdot (1-\hat{\alpha}-\hat{\beta})\right) \cdot \E[S(f(X), Y)]
\end{align*}
Due to the incentive property of $\E[S(f(X), Y)]$, a sufficient and necessary condition to remain truthful is 
\[
(1-\gamma) \cdot (1-\alpha-\beta) + \gamma \cdot   (1-\hat{\alpha}-\hat{\beta})> 0
\]

Now we prove
\[
1-\hat{\alpha}-\hat{\beta} \geq - (1-\alpha^*-\beta^*)
\]

This is because the most adversarial classifier cannot be worse than reversing the Bayesian optimal classifier - otherwise if the error rate is higher than reversing the Bayes optimal classifier (with error rates $1-\alpha^*, 1-\beta^*$):
\[
\hat{\alpha} > 1-\alpha^*,~\hat{\beta} > 1-\beta^*
\]
we can reverse the adversarial classifier to obtain a classifier that performs better than the Bayes optimal one:
\[
1-\hat{\alpha} < \alpha^*,~1-\hat{\beta} < \beta^*
\]
which is a contradiction! Therefore
\[
\hat{\alpha} \leq 1-\alpha^*,~\hat{\beta} \leq 1-\beta^* \Rightarrow  1-\hat{\alpha}-\hat{\beta} \geq - (1-\alpha^*-\beta^*)
\]

Therefore a sufficient condition is given by 
\[
\frac{1-\gamma}{\gamma} > \frac{1-\alpha^*-\beta^*}{1-\alpha-\beta}
\]

\end{proof}

%\subsection*{Differentially private hypothesis}
%\paragraph{Output perturbation} A typical solution concept/framework to share hypothesis while preserving privacy of $D_i$ is via adding noise to $f_i$ (output perturbation):
%\[
%\tilde{\f}_i = \f_i + \nu(\epsilon,N)
%\]
%where $\nu(\epsilon,N)$ is a noise term drawn from a certain distribution, as a function of $\epsilon$ and $N$. The following theorem states the privacy guarantees of adding a properly chosen $\nu(\epsilon,N)$:
%\begin{theorem}
%When $\nu(\epsilon,N)$ is drawn XXXX, $\tilde{\f}_i$ preserves $\epsilon$-DP.
%\end{theorem}
%\paragraph{Output sampling} When the hypothesis space is more complicated and non-parametric, it is not easy to perturb the final hypothesis with an additive term. \cite{} proposes a sampling based method to preserve privacy.

\subsection*{B. Experiment details}
\subsection*{Training hyper-parameters}
We did not tune hyper-parameters for the training process, since we focus on hypothesis elicitation rather than improving the agent's ability/performance. We concentrate on different misreport transition matrices as well as the misreport rate which falls in the range of $[0.0, 0.5]$ during the hypothesis elicitation stage. In the original submitted version, the mentioned machine learning architecture for agent $\A_{S}$ and $\A_{W}$ is a typo. The architecture that we use in our experiments are specified below.
\begin{itemize}
  \item \textbf{MNIST}\\
  Agent $\A_{W}$ is trained on uniformly sampled 25000 training images from MNIST training dataset. The architecture is LeNet.  
  Agent $\A_{S}$ is trained on uniformly sampled 25000 training images (with a different random seed) from MNIST training dataset. The architecture is a 13-layer CNN architecture. Both agents are trained for 100 epochs. The optimizer is SGD with momentum 0.9 and weight decay 1e-4. The initial learning rate is 0.1 and times 0.1 every 20 epochs. 
  
  \item \textbf{CIFAR-10}\\
  Agent $\A_{W}$ is trained on uniformly sampled 25000 training images from CIFAR-10 training dataset. The architecture is ResNet34.  
  Agent $\A_{S}$ is trained on uniformly sampled 25000 training images (with a different random seed) from CIFAR-10 training dataset. The architecture is a 13-layer CNN architecture. Both agents are trained for 180 epochs. The optimizer is SGD with momentum 0.9 and weight decay 1e-4. The initial learning rate is 0.1 and times 0.1 every 40 epochs.
  
  \item \textbf{Adversarial attack}\\
   We  use  LinfPGDAttack,  introduced in AdverTorch~\cite{ding2019advertorch} to simulate the adversary untargeted attacks. We adopt an example parameter setting provided by AdverTorch: cross-entropy loss function, eps is 0.15, number of iteration is 40, maximum clip is 1.0 and minimum clip is 0.0.
   
\end{itemize}

\subsection*{Misreport models}
% As said in \ref{sec:exp_1}, each element of a misreport transition matrix $T_{j,k}=\PP(\tilde{f}_i(X)=k|\f_i(X) = j)$. 
\begin{itemize}
  \item \textbf{Uniform misreport model}\\
  In certain real world scenarios, an agent refuses to truthfully report the prediction by randomly selecting another different class as the prediction. We use the uniform misreport transition matrix to simulate this case. In our experiments, we assume that the probability of flipping from a given class into other classes to be the same: $T_{i,j}=T_{i,k}=e,  \forall i\neq j \neq k$. Mathematically, the misreport transition matrix can be expressed as:
    
{\tiny{\[
\begin{bmatrix}
    1-9e & e & e & e & e & e & e & e & e & e \\
    e & 1-9e & e & e & e & e & e & e & e & e\\
    e & e & 1-9e & e & e & e & e & e & e & e \\
    e & e & e & 1-9e & e & e & e & e & e & e\\
    e & e & e & e & 1-9e & e& e & e & e & e \\
    e & e & e & e & e & 1-9e& e & e & e & e \\
    e & e & e & e & e & e & 1-9e & e & e & e \\
    e & e & e & e & e & e & e& 1-9e & e & e \\
    e & e & e & e & e & e & e & e & 1-9e & e\\
    e & e & e & e & e & e & e & e & e & 1-9e
\end{bmatrix}\]}}
    We choose the value of $9e$ to be: $[0.0, 0.05, 0.10, 0.15, 0.20, 0.25, 0.30, 0.35, 0.40, 0.45, 0.50]$ for all mentioned  experiments in Section \ref{sec:exp}.
    
  \item \textbf{Sparse misreport model}\\
  For low resolution images or similar images, an agent is only able to make ambiguous decision. The agent may doubt whether his prediction could result in a higher score than choosing the other similar category or not. Thus, he may purposely report the other one which is not the original prediction. Even if an expert agent is able to distinguish confusing classes, it may still choose not to truthfully report the prediction since many other agents can not classify the image successfully. Without ground truth for verification, reporting truthfully and giving the correct prediction may result in a lower score than misreporting. For the sparse misreport transition matrix, we assume $T_{i,j}=T_{j,i}=e, \forall (i, j)$, $i\neq j$. Mathematically, the misreport transition matrix can be expressed as:
     
{\tiny{\[
\begin{bmatrix}
    1-e & 0 & e & 0 & 0 & 0 & 0 & 0 & 0 & 0 \\
    0 & 1-e & 0 & 0 & 0 & 0 & 0 & 0 & 0 & e \\
    e & 0 & 1-e & 0 & 0 & 0 & 0 & 0 & 0 & 0 \\
    0 & 0 & 0 & 1-e & 0 & e & 0 & 0 & 0 & 0 \\
    0 & 0 & 0 & 0 & 1-e & 0 & 0 & e & 0 & 0 \\
    0 & 0 & 0 & e & 0 & 1-e & 0 & 0 & 0 & 0 \\
    0 & 0 & 0 & 0 & 0 & 0 & 1-e & 0 & e & 0 \\
    0 & 0 & 0 & 0 & e & 0 & 0 & 1-e & 0 & 0 \\
    0 & 0 & 0 & 0 & 0 & 0 & e & 0 & 1-e & 0 \\
    0 & e & 0 & 0 & 0 & 0 & 0 & 0 & 0 & 1-e 
\end{bmatrix}\]}}
    We choose the value of $e$ to be: $[0.0, 0.05, 0.10, 0.15, 0.20, 0.25, 0.30, 0.35, 0.40, 0.45, 0.50]$ for all mentioned  experiments in Section \ref{sec:exp}.
  
\end{itemize}

\subsection*{Evaluation}
In the training stage, we use categorical cross-entropy as our loss function for evaluation. In the hypothesis elicitation stage, we choose two kinds of reward structure in the CA mechanism: 0-1 score and CE score. Let $f^*$ denote the ``optimal" agent, $f_i$ denote the agent waiting to be scored. Mathematically, 0-1 score (one-hot encode the model output probability list) can be expressed as:
$$
S(f_i(x_n),f^*(x_n)):=\BR\left(f_i(x_n) = f^*(x_n)\right)-\BR\left(f_i(x_{p_1})=f^*(x_{p_2})\right)
$$

CE score can be expressed as:
$$
S_{\ell_{CE}}(f_i(x_n),f^*(x_n)) = -\ell_{CE}(f_i(x_n),f^*(x_n)))-(-\ell_{CE}(f_i(x_{p_1}),f^*(x_{p_2}))).~\label{ca:calibrated}
$$

\begin{itemize}
  \item \textbf{Ground truth for verification}\\
  When there are ground truth labels for verification, $f^*(x)$ is equal to the corresponding ground truth label for a test image $x$.
  \item \textbf{No ground truth for verification}\\
  When there aren't ground truth labels for verification, we substitute the other agent's prediction for the ground truth label. Thus, $f^*(x):=\f_j(x), j\neq i$ for a test image $x$.
  \item \textbf{Adversarial attacks and no ground truth verification}\\
  To simulate the case when facing a 0.3-fraction of adversary in the participating population and without ground truth for verification, we use LinfPGDAttack to influence the labels for verification. Specifically, given a test image $x$, LinfPGDAttack will attack $x$ and generate a noisy image $x_{attack}$, we replace the ground truth labels with weighted agents' prediction. Mathematically, $f^*(x):=0.7\cdot \f_j(x)+0.3\cdot \f_j(x_{attack}), j\neq i$ for the test image $x$. The weighting procedure is implemented before one-hot encoding stage for $\f_j(x)$ and $\f_j(x_{attack})$. After that, the one-hot encoded label is considered as the ground truth label for verification. 
  
\end{itemize}

\subsection*{Statistics and central tendency}
\begin{itemize}
  \item \textbf{Misreport rate}\\
 The statistics ``misreport rate" in Figure~\ref{Fig:fig1_1},\ref{Fig:fig2} symbolizes the proportion of misreported labels. For example, in a uniform transition matrix, $T_{i,j}=T_{i,k}=e,  \forall i\neq j \neq k$, the misreport rate is $9e$. While in a sparse transition matrix setting, given $T_{i,j}=T_{j,i}=e, \forall (i, j)$, $i\neq j$, the misreport rate is exactly $e$.\\
 To see how adversarial attacks will affect the elicitation, in Figure~\ref{Fig:fig3}, we choose the same misreport transition matrices used in Figure~\ref{Fig:fig1_1},\ref{Fig:fig2} and calculate the misreport rate before applying the adversarial attacks. 
 
 \item \textbf{Central tendency}\\
 We run 5 times for each experiment setting as shown in Figure~\ref{Fig:fig1_1},\ref{Fig:fig2},\ref{Fig:fig3}. The central line is the mean of 5 runs. The ``deviation interval" (error bars) is the maximum absolute score deviation. For example, suppose in the elicitation with ground truth verification setting, we have 5 runs/scores for a uniform transition matrix with a 0.25 misreport rate: $[0.5, 0.5, 0.2, 0.1, 0.7]$, the mean is 0.4, the corresponding absolution score deviation is: $[0.1, 0.1, 0.2, 0.3, 0.3]$. Then the ``deviation interval" comes to: $0.4\pm 0.3$. Since the number of runs is not a large number, absolute deviation is no less than the standard deviation in our experiment setting.
 
\end{itemize}

\subsection*{Computing infrastructure}
In our experiments, we use a GPU cluster (8 TITAN V GPUs and 16 GeForce GTX 1080 GPUs) for training and evaluation.

\end{document}